\newcommand{\A}{\ensuremath{\mathbf{A}}}
\newcommand{\B}{\ensuremath{\mathbf{B}}}
\newcommand{\C}{\ensuremath{\mathbf{C}}}
\newcommand{\E}{\ensuremath{\mathbf{E}}}
\newcommand{\F}{\ensuremath{\mathbf{F}}}
\newcommand{\G}{\ensuremath{\mathbf{G}}}
\newcommand{\I}{\ensuremath{\mathbf{I}}}
\newcommand{\K}{\ensuremath{\mathbf{K}}}
\newcommand{\RR}{\ensuremath{\mathbf{R}}}
\newcommand{\T}{\ensuremath{\mathbf{T}}}
\newcommand{\U}{\ensuremath{\mathbf{U}}}
\newcommand{\V}{\ensuremath{\mathbf{V}}}
\newcommand{\W}{\ensuremath{\mathbf{W}}}
\newcommand{\X}{\ensuremath{\mathbf{X}}}
\newcommand{\Y}{\ensuremath{\mathbf{Y}}}
\newcommand{\Z}{\ensuremath{\mathbf{Z}}}
\renewcommand{\aa}{\ensuremath{\mathbf{a}}}
\newcommand{\f}{\ensuremath{\mathbf{f}}}
\newcommand{\g}{\ensuremath{\mathbf{g}}}
\newcommand{\p}{\ensuremath{\mathbf{p}}}
\newcommand{\q}{\ensuremath{\mathbf{q}}}
\newcommand{\uu}{\ensuremath{\mathbf{u}}}
\newcommand{\vv}{\ensuremath{\mathbf{v}}}
\newcommand{\w}{\ensuremath{\mathbf{w}}}
\newcommand{\x}{\ensuremath{\mathbf{x}}}
\newcommand{\y}{\ensuremath{\mathbf{y}}}
\newcommand{\0}{\ensuremath{\mathbf{0}}}
\newcommand{\balpha}{\ensuremath{\boldsymbol{\alpha}}}
\newcommand{\bbeta}{\ensuremath{\boldsymbol{\beta}}}
\newcommand{\bLambda}{\ensuremath{\boldsymbol{\Lambda}}}
\newcommand{\bSigma}{\ensuremath{\boldsymbol{\Sigma}}}
\newcommand{\bTheta}{\ensuremath{\boldsymbol{\Theta}}}
\newcommand{\bbE}{\ensuremath{\mathbb{E}}}
\newcommand{\bbR}{\ensuremath{\mathbb{R}}}
\newcommand{\calN}{\ensuremath{\mathcal{N}}}
\newcommand{\calO}{\ensuremath{\mathcal{O}}}
\newcommand{\norm}[1]{\left\lVert#1\right\rVert}
\newcommand{\traceop}{\operatorname{tr}}
\newcommand{\trace}[1]{\ensuremath{\traceop\left(#1\right)}}
\newcommand{\ie}{i.e.\@}
\newcommand{\eg}{e.g.\@}
\newtheorem{thm}{Theorem}
\newtheorem{lem}{Lemma}
\newcommand{\Note}[2]{} 
\newcommand{\SideNote}[2]{} 
\renewcommand{\Note}[2]{\todo[color=#1,size=\small, inline=true]{#2}} \setlength{\marginparwidth}{2cm} 
\renewcommand{\SideNote}[2]{\todo[color=#1,size=\small]{#2}} \setlength{\marginparwidth}{2cm}
\newcommand{\kl}[1]{}
\newcommand{\raman}[1]{}
\newcommand{\weiran}[1]{}
\newcommand{\jeff}[1]{}
\newcommand{\autoencoder}{autoencoder}
\newcommand{\autoencoders}{autoencoders}
\newcommand{\DCCA}{DCCA}
\newcommand{\DCCAE}{DCCAE}
\newcommand{\CorrAE}{CorrAE}
\newcommand{\DistAE}{DistAE}
\newcommand{\RKCCA}{FKCCA}
\newcommand{\NKCCA}{NKCCA}
\newcommand{\SVAE}{SplitAE} % {1$\rightarrow$2AE}
\newcommand{\CCA}{CCA}
\newcommand{\ttt}{\ensuremath{\mathbf{t}}}
\newcommand{\bbb}{\ensuremath{\mathbf{b}}}
\begin{document}

\title{On Deep Multi-View Representation Learning:\\Objectives and Optimization}%~\SideNoteRA{We should use another title --  different from the ICML paper}

\author{\name Weiran Wang \email weiranwang@ttic.edu \\
       \addr Toyota Technological Institute at Chicago \\
       Chicago, IL 60637, USA 
       \AND
       \name Raman Arora \email arora@cs.jhu.edu \\
       \addr Department of Computer Science \\
       Johns Hopkins University \\
       Baltimore, MD 21218, USA 
       \AND 
       \name Karen Livescu \email klivescu@ttic.edu \\
       \addr Toyota Technological Institute at Chicago \\
       Chicago, IL 60637, USA
       \AND
       \name Jeff Bilmes \email bilmes@ee.washington.edu \\
       \addr Department of Electrical Engineering \\
       University of Washington, Seattle \\
       Seattle, WA 98195, USA
       }

\editor{}

\maketitle

\begin{abstract} 
We consider learning representations (features) in the setting in which we have access to multiple unlabeled views of the data for 
% (KL) representation 
learning while only one view is available 
% (KL) at test time.~\SideNoteRA{A bit odd to use ``learning representations'' and ``representation learning'' in the same sentence -- are you avoiding saying ``training'' as it may somehow suggest ``supervision''?} 
for downstream tasks.  Previous work on this problem has proposed several techniques based on deep neural networks, typically involving either autoencoder-like networks with a reconstruction objective or paired feedforward networks with a batch-style correlation-based objective.  We analyze several techniques based on prior work, as well as new variants, and compare them empirically  on 
image, speech, and text tasks.
%~\SideNoteRA{``visual'' domain is too vague. Digit recognition is perhaps not vision, seems more like pattern recognition, or may be it is, I am not sure. Could we simply say image, speech and language datasets or tasks?} 
% (KL) To our knowledge this is the first head-to-head comparison of a variety of such techniques on multiple tasks.
We find an advantage for correlation-based representation learning, while the best results on most tasks are obtained with our new variant, deep canonically correlated autoencoders (\DCCAE). We also explore a stochastic optimization procedure for minibatch correlation-based objectives and discuss the time/performance trade-offs for kernel-based and neural network-based implementations.%~\raman{Not sure what you mean by exploring the ``stochastic optimization'' procedure; as opposed to what? Do you mean ``stochastic approximation''?}
\end{abstract} 

\jeff{change latex compile to use latex hyperref capability.} \weiran{This is solved.}

\section{Introduction}
\label{sec:intro}

In many applications, we have access to multiple ``views'' of data at training time while only one view is available at test time,
% (KL) 
or for a downstream task. The views can be multiple measurement modalities, such as simultaneously recorded audio + video~\citep{Kidron_05a,Chaudh_09a}, audio + articulation~\citep{AroraLivesc13a,Wang_15a}, images + text \linebreak[4] \citep{Hardoon_04a,SocherLi10a,Hodosh_13a,YanMikolaj15a}, or parallel text in two languages~\citep{Vinokour_03a,Haghig_08a,Chandar_14a,FaruquiDyer14a,Lu_15a}, but may also be different information extracted from the same source, such as words + context~\citep{Pennin_14a} or document text + text of inbound hyperlinks~\citep{BickelScheff04a}.  The presence of multiple information sources presents an opportunity to learn better representations (features) by analyzing the views simultaneously.  Typical approaches are based on learning a feature transformation of the ``primary'' view (the one available at test time) that captures useful information from the second view using a paired two-view training set.  Under certain assumptions, theoretical results exist showing the advantages of multi-view techniques for downstream tasks~\citep{KakadeFoster07a,Foster_09a,Chaudh_09a}.  
% (KL) 
Experimentally, prior work has shown the benefit of multi-view methods on tasks such as retrieval~\citep{Vinokour_03a,Hardoon_04a,SocherLi10a,Hodosh_13a}, clustering~\citep{Blasch08a,Chaudh_09a}, and classification/recognition~\citep{Dhillon_11b,AroraLivesc13a,Ngiam_11b}. 

Recent work has introduced several approaches for multi-view representation learning based on deep neural networks (DNNs), using two main training criteria (objectives).
% that have been applied for DNN-based multi-view representation learning. 
One type of objective is based on deep \autoencoders~\citep{HintonSalakh06a}, where the objective is to learn a compact representation that best reconstructs the inputs. In the multi-view learning scenario, it is natural to use an encoder to extract the shared representation from the primary view, and use different decoders to reconstruct each view's input features from the shared representation. % This objective assumes that the common underlying signal shared by different views can be extracted from a single view and can be used to reconstruct both views' inputs, and 
This approach has been shown to be effective for speech and vision tasks~\citep{Ngiam_11b}.
% (KL) ~\SideNoteRA{They also consider cross modality learning and shared representation learning besides fusion.}

% (KL)
The second main DNN-based multi-view approach is based on 
deep extensions of canonical correlation analysis (CCA,~\citealp{Hotell36a}), which learns features in two views that are maximally correlated. 
The CCA objective has been studied extensively 
% (KL)
and has a number of useful properties and interpretations~\citep{Borga01a,BachJordan02a,BachJordan05a,Chechik_05a}, and the optimal linear projection mappings can be obtained by solving an eigenvalue system of a matrix whose dimensions equal the input dimensionalities. To overcome the limiting power of linear projections, a nonlinear extension--kernel canonical correlation analysis (KCCA)--has also been proposed~\citep{LaiFyfe00a,Akaho01a,Melzer_01a,BachJordan02a,Hardoon_04a}. % The idea of KCCA is to first map inputs of each view to functions of a Reproducing Kernel Hilbert Space (RKHS), and then to learn linear CCA projections from the (possibly infinite dimensional) RKHS representations. Due to the representation theorem of RKHS, the final projection can be written as linear combination of kernel functions evaluated at training samples, where the coefficients can be obtained by solving an eigenvalue system of a matrix whose dimension equals training set size. It has been widely shown that the CCA/KCCA projection of each view contains rich information about the other view(s), which is useful for various tasks such as retrieval~\citep{Vinokour_03a,Hardoon_04a}, clustering~\citep{Blasch08a,Chaudh_09a} and classification/recognition~\citep{Dhillon_11b,AroraLivesc13a}.
CCA and KCCA have long been the workhorse for multi-view feature learning and dimensionality reduction~\citep{Vinokour_03a,KakadeFoster07a,SocherLi10a,Dhillon_11b}. 
% (KL) On the other hand, deep neural networks (DNN) have recently achieved outstanding results on benchmark problems of various domains~\citep{Krizhev_12a,Hinton_12a}.
%~\SideNoteRA{You mention CCA/KCCA for multiview learning and then jump to DNN based models that are not multiview}%, in particular computer vision~\citep{Krizhev_12a} and speech recognition~\citep{Hinton_12a}. 
% (KL) , and has the advantage of combining different learning stages such as feature extraction and classifier training in a unified architecture and objective. 
Several alternative nonlinear CCA-like approaches based on neural networks have also been proposed~\citep{LaiFyfe99a,Hsieh00a}, but the full DNN extension of CCA, termed deep CCA (DCCA,~\citealp{Andrew_13a}) has been developed only recently.  Compared to kernel methods, DNNs are more scalable to large amounts of training data and also have the potential advantage that the non-linear mapping can be learned, unlike with a fixed kernel approach.

The contributions of this paper are as follows.%%\footnote{An early version of this work appeared in~\citep{Wang_15b}.  This paper expands on that work with additional experiments and theoretical analysis.}
  We present a head-to-head comparison of several DNN-based approaches, along with linear and kernel CCA, in the unsupervised multi-view feature learning setting where the second view is not available at test time. 
%However, to our knowledge no head-to-head comparison on multiple tasks has previously been done.  
We compare approaches based on prior work, as well as developing and comparing new variants.  Empirically, we find that CCA-based approaches tend to outperform unconstrained reconstruction-based approaches.  One of the new methods we propose, a DNN-based model combining CCA and \autoencoder-based terms, is the consistent winner across several tasks.  
% (KL)
Finally, we study a stochastic optimization approach for deep CCA, both empirically and theoretically.  To facilitate future work, we have released our implementations and a new benchmark dataset of simulated two-view data based on MNIST.\footnote{Data and implementations can be found at \texttt{http://ttic.uchicago.edu/\~{}wwang5/dccae.html}. %%\kl{link here}
}

\jeff{needs to be a few sentences here saying the key differences from and additions over the ICML-2015 paper, and what justifies this as a separate JMLR journal paper.} \kl{done}

An early version of this work appeared in~\citep{Wang_15b}. This paper expands on that work with expanded discussion of related work (Sections~\ref{sec:related_kcca} and~\ref{sec:related_other}), additional experiments, and theoretical analysis.  In particular we include additional experiments on word similarity tasks with multilingual word embedding learning (Section~\ref{sec:NLP}); extensive empirical comparisons between batch and stochastic optimization methods for DCCA; and comparisons between \DCCA\ and two popular low-rank approximate KCCA methods, demonstrating their computational trade-offs (Section~\ref{sec:expt_opt}).  We also analyze the stochastic optimization approach for \DCCA\ theoretically in Appendix~\ref{sec:append-sgd}.

\section{DNN-based multi-view feature learning}
\label{sec:algorithm}

In this section, we discuss several existing and new multi-view learning approaches based on deep feed-forward neural networks, including their objective functions and optimization procedures. Schematic diagrams summarizing the methods are given in Fig.~\ref{f:models}.

\paragraph{Notation} In the multi-view feature learning scenario, we have access to paired observations from two views, denoted ${(\x_1,\y_1),\dots,(\x_N,\y_N)}$, where $N$ is the sample size and $\x_i\in \bbR^{D_x}$ and $\y_i\in \bbR^{D_y}$ for $i=1,\dots,N$. We also denote the data matrices for each view $\X=[\x_1,\dots,\x_N]$ and $\Y=[\y_1,\dots,\y_N]$. We use bold-face letters, \eg~$\f$, to denote multidimensional mappings implemented by DNNs.
% or kernel machines. 
%\weiran{Karen, you want to distinguish DNNs and kernel machines here?} \kl{it seems odd to mention kernel machines here since the whole section is about DNN methods.  I took it out here and will mention the connection to the notation in sec 3.}
A DNN $\f$ of depth $K_\f$ implements a nested mapping of the form $\f(\x)=\f_{K_\f}( (\cdots \f_1(\x;\W_1) \cdots);\W_{K_f})$, %~\SideNoteRA{The notation is not clear; also there is an extra right parenthesis \weiran{parenthesis added}}
where $\W_j$ are the weight parameters\footnote{Biases can be introduced by appending an extra $1$ to the input.} of layer $j$, $j=1,\dots,K_\f$, and $\f_j$ is the mapping of layer $j$ which takes the form of a linear mapping followed by a element-wise activation: $\f_j(\ttt)=s(\W_j^\top \ttt)$, and typical choices for $s$ include sigmoid, tanh, ReLU, etc. 
% \weiran{I changed activation from $\sigma$ to $s$, to be different from the singular values used later.}
The collection of the learnable parameters in model $\f$ is denoted $\W_\f$, \eg~$\W_\f=\{\W_1,\dots,\W_{K_f}\}$
%~\SideNoteRA{Not sure if these matrices have all compatible sizes to be stacked. \weiran{I switched to curly brackets}} 
in the DNN case.
We write the $\f$-projected (view 1) data matrix as $\f(\X)=[\f(\x_1),\dots,\f(\x_N)]$. The dimensionality of the projection
% (KL)
(feature vector), which is equal to the number of output units in the DNN case, is denoted $L$.
%\kl{Might be good to be even more explicit here, e.g., $\f: \Re^{D_x} \rightarrow \Re^L$.  Could even do that for each layer $f_j$.} \weiran{In this paper, $\tilde{\f}$ maps inputs to $\bbR^L$ but $\f$ may not.}

\begin{figure*}[t]
\centering
\psfrag{x}[][]{$\x$}
\psfrag{y}[][]{$\y$}
\begin{tabular}{@{}c@{\hspace{0\linewidth}}c@{\hspace{0.02\linewidth}}c@{}}
\psfrag{reconstructed x}[][]{Reconstructed $\x$}
\psfrag{reconstructed y}[][]{Reconstructed $\y$}
\psfrag{f}[][]{$\f$}
\psfrag{g}[][]{$\g$}
\psfrag{p}[][]{$\p$}
\psfrag{q}[r][]{$\q$}
\includegraphics[width=0.28\linewidth,height=0.3\linewidth]{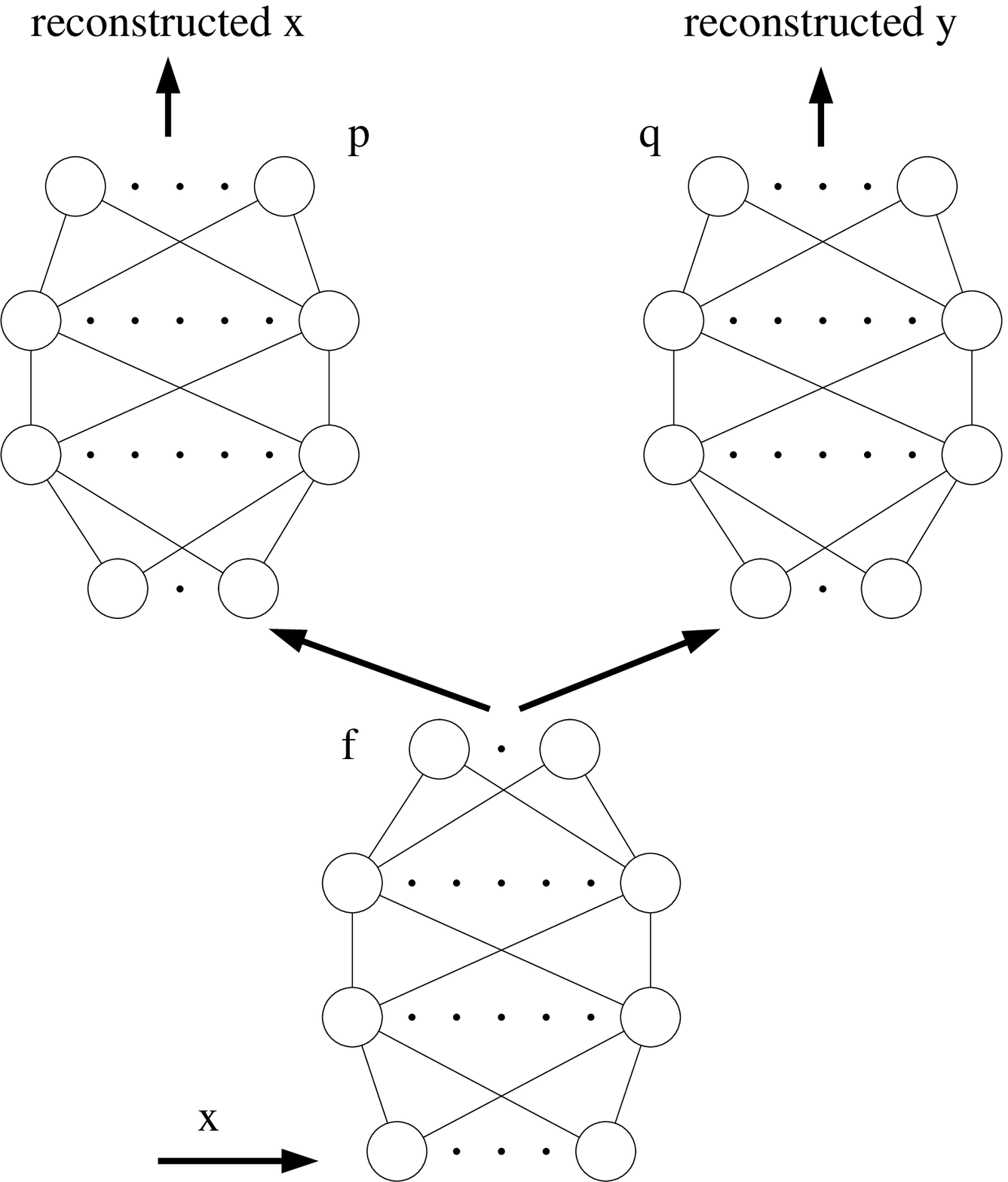} &
\psfrag{v1}[][][0.8]{View 1}
\psfrag{v2}[][][0.8][90]{View 2}
\psfrag{U}[][]{$\U$}
\psfrag{V}[][]{$\V$}
\psfrag{f}[][]{$\f$}
\psfrag{g}[][]{$\g$}
\includegraphics[width=0.35\linewidth,height=0.27\linewidth]{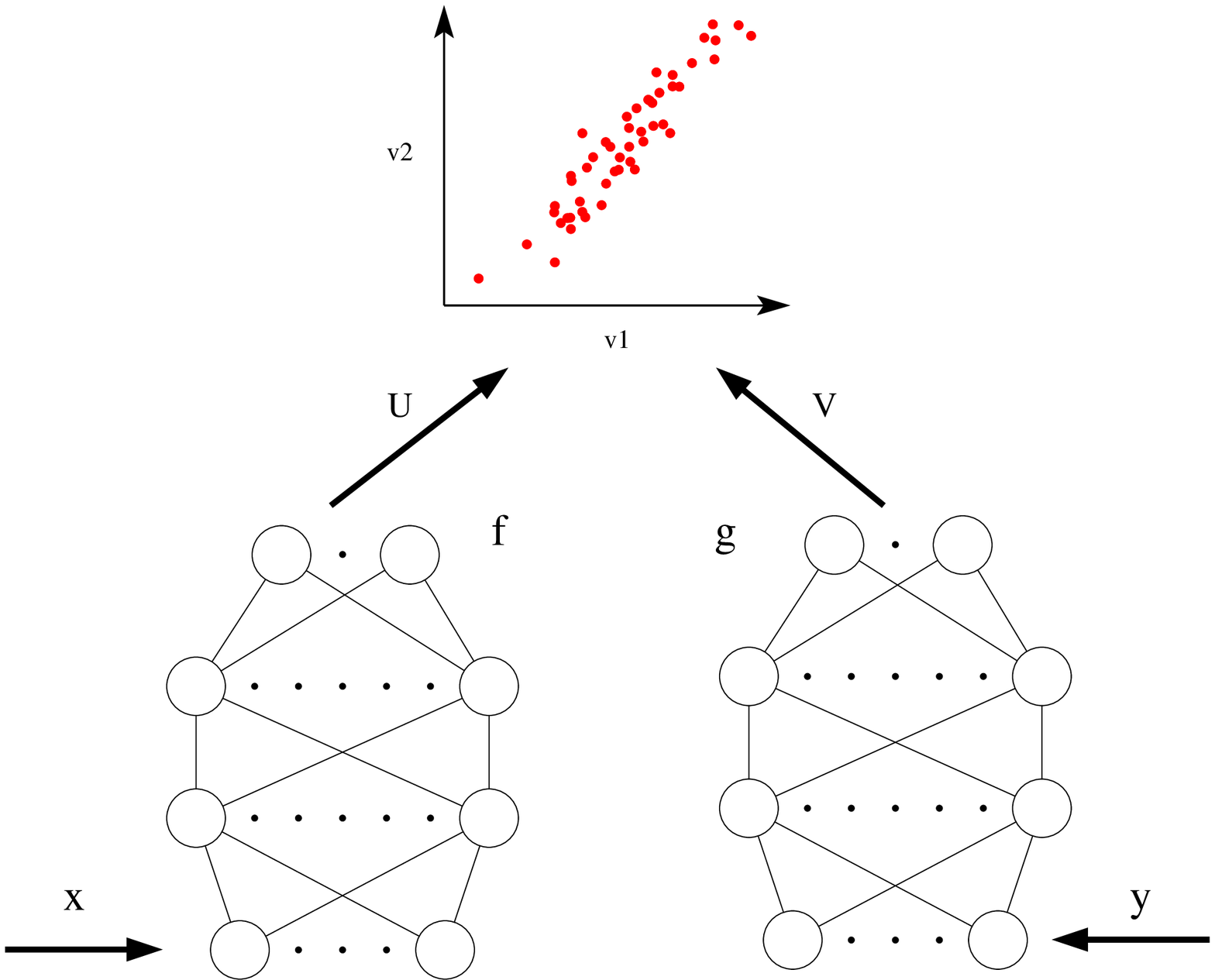} &
\psfrag{reconstructed x}[b][]{Reconstructed $\x$}
\psfrag{reconstructed y}[b][]{Reconstructed $\y$}
\psfrag{v1}[t][][0.8]{View 1}
\psfrag{v2}[][][0.8][90]{View 2}
\psfrag{U}[][][0.8]{$\U$}
\psfrag{V}[][][0.8]{$\V$}
\psfrag{f}[][]{$\f$}
\psfrag{g}[][]{$\g$}
\psfrag{p}[][]{$\p$}
\psfrag{q}[r][]{$\q$}
\includegraphics[width=0.33\linewidth,height=0.3\linewidth]{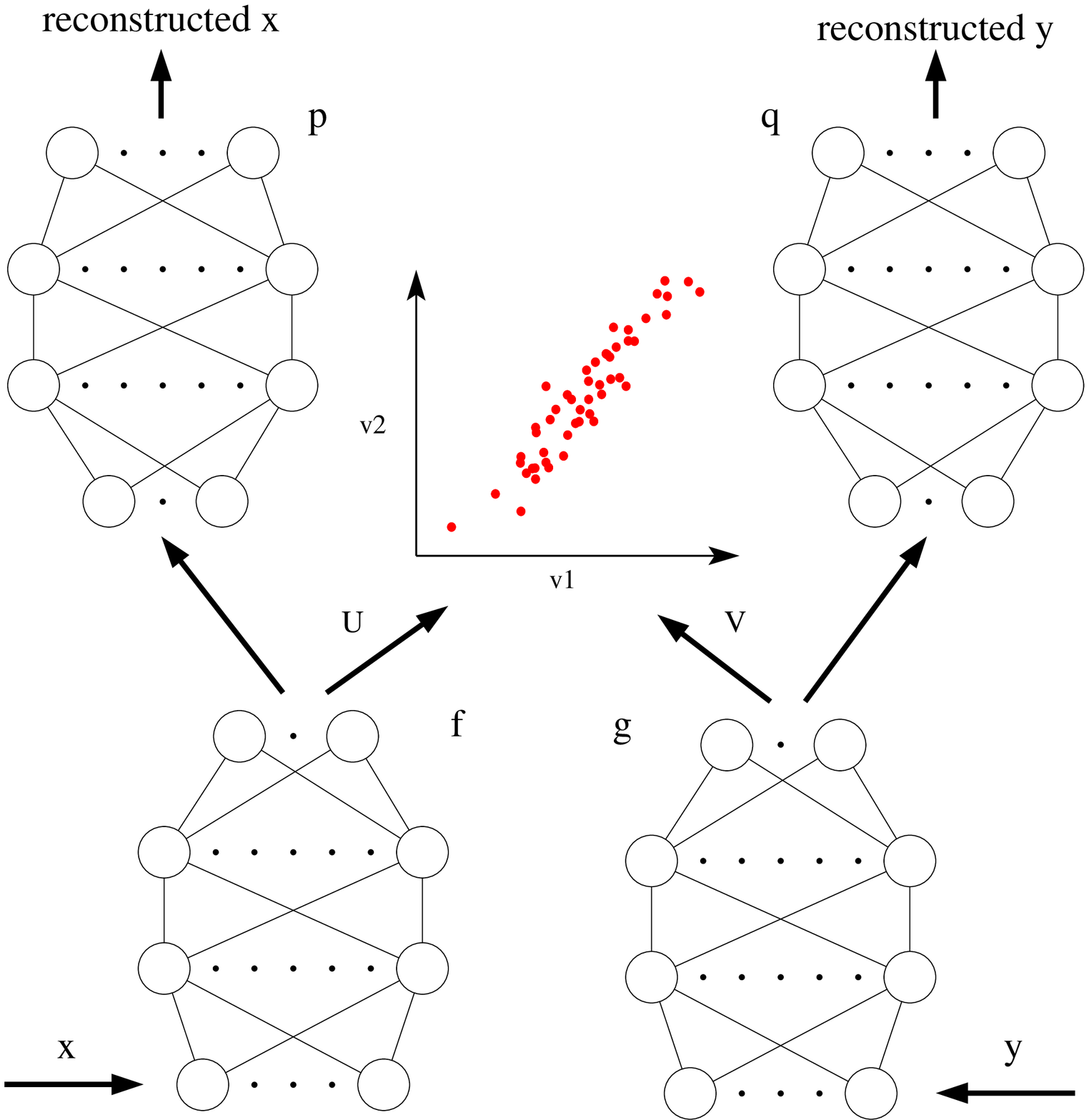} \\
(a) \SVAE & (b) \DCCA & (c) \DCCAE/\CorrAE/\DistAE
\end{tabular}
\caption{Schematic diagram of DNN-based multi-view representation learning approaches.}
\label{f:models}
\end{figure*}

% (KL) \subsection{One-to-two-view \autoencoders\ (\SVAE)}
\subsection{Split \autoencoders\ (\SVAE)}

\citet{Ngiam_11b} propose to extract shared representations by 
% (KL)
learning to reconstruct both views from the one view that is available at test time.  In this approach, the feature extraction network $\f$ is shared while the reconstruction networks $\p$ and $\q$ are separate for each view.  We refer to this model as a 
% (KL) one-to-two-view 
split \autoencoder\ (\SVAE), shown schematically in Fig.~\ref{f:models} (a). The objective of this model is the sum of reconstruction errors for the two views (we omit the $\ell_2$ weight decay term for all models in this section):
%\kl{In all of these equations, could remove the parentheses around the summand to reduce clutter \weiran{done}}\SideNoteRA{Is it immediately clear that the maps $f$, $p$ and $q$ are parametrized by $\W_\f$, $\W_\p$ and $\W_\q$? Same comment applies to other instances below. \weiran{Was it not clear from the \textbf{Notation} paragraph?}}
\begin{align} % \textstyle
\min\limits_{\W_\f,\W_\p,\W_\q} \quad  \frac{1}{N}  \sum_{i=1}^N  \norm{\x_i-\p(\f(\x_i))}^2 + \norm{\y_i-\q(\f(\x_i))}^2 . \nonumber % \label{e:svae} 
% \nonumber \\  & + \gamma (\norm{\W_\f}^2 + \norm{\W_\p}^2 + \norm{\W_\q}^2), 
\end{align}
The intuition for this model is that the shared representation can be extracted from a single view, and can be used to reconstruct all views.\footnote{The authors also propose a bimodal deep \autoencoder\ combining DNN transformed features from both views; this model is more natural for the multimodal fusion setting, where both views are available at test time, but can also be used in our multi-view setting (see~\citealp{Ngiam_11b}).
Empirically, however, \citet{Ngiam_11b} report that \SVAE\ tends to work better in the multi-view setting than bimodal \autoencoders.} 
The \autoencoder\ loss is the empirical expectation of the loss incurred at each training sample, and thus stochastic gradient descent (SGD) can be used to optimize the objective efficiently with the gradients estimated on small minibatches of samples.

\subsection{Deep canonical correlation analysis (\DCCA)}
\label{sec:DCCA}

\citet{Andrew_13a} propose a DNN extension of CCA termed deep CCA (\DCCA; see Fig.~\ref{f:models} (b)).  In \DCCA, two DNNs $\f$ and $\g$ are used to extract nonlinear features for each view and the canonical correlation between the extracted features $\f(\X)$ and $\g(\Y)$ is maximized:
\begin{align} % \textstyle
\max\limits_{\W_\f,\W_\g,\U,\V} \quad & \frac{1}{N}\trace{\U^\top \f(\X)  \g(\Y)^\top \V}  \label{e:dcca} \\
% + \gamma \left(\norm{\W_\f}^2 + \norm{\W_\g}^2 \right) \\
\text{s.t.} \quad & \U^\top \left(\frac{1}{N} { \f(\X) \f(\X)^\top}  + r_x \I\right) \U = \I, \nonumber \\
   & \V^\top \left(\frac{1}{N} {\g(\Y) \g(\Y)^\top} + r_y \I\right) \V = \I, \nonumber\\
& \uu_i^\top \f(\X) \g(\Y)^\top \vv_j=0, \quad \text{for}\quad i\neq j, \nonumber
\end{align}
where $\U=[\uu_1,\dots,\uu_L]$ and $\V=[\vv_1,\dots,\vv_L]$ are the CCA directions that project the DNN outputs and $(r_x, r_y)>0$ are regularization parameters added to the diagonal of the sample auto-covariance matrices % for sample covariance estimation 
\citep{BieMoor03a,Hardoon_04a}.
%~\SideNoteRA{Not sure what you mean by ``regularization parameters for sample covariance estimation''.\weiran{Modified a bit.}} 
In \DCCA, $\U^\top \f(\cdot)$ is the final projection mapping used for 
% (KL) testing.
downstream tasks.%
% (KL) 
\footnote{In principle there is no need for the final linear projection; we could define DCCA such that the correlation objective and constraints are imposed on the final nonlinear layer of the two DNNs.  The final linear projection is equivalent to constraining the final DNN layer to be linear.  While this is in principle not needed, it is crucial for practical algorithmic implementations such as ours, and matches the original formulation of DCCA~\citep{Andrew_13a}.}
One intuition for CCA-based objectives is that, while it may be difficult to accurately reconstruct one view from the other view, it may be easier, and perhaps sufficient, to learn a predictor of a \textit{function} (or \textit{subspace}) of the second view. 
In addition, it should be helpful for the learned dimensions within each view to be uncorrelated so that they provide complementary information. 
%%Weiran removed the comment and discussion of independence by autoencoders. 
% \klcomment{can maybe rephrase in a nicer way. I think it's helpful to give some intuition for each method, though}  
% One concern about \DCCA\ % \footnote{We are indebted to Geoff Hinton for useful discussions on this point. \klcomment{OK to mention Hinton in blind reviewing?  Maybe save for final version.}}
% is that constraining the learned dimensions to be uncorrelated may be too weak:  It is possible for the learned dimensions to be uncorrelated while still having high mutual information, and therefore being less useful for downstream tasks.  In % the following sections, we consider new methods that attempt to address this concern. \wwcomment{I think the last three sentences are also somewhat vague.}

\paragraph{Optimization} The \DCCA\ objective couples all training samples through the whitening constraints (i.e., it is a fully batch objective), so stochastic gradient descent (SGD) cannot be applied in a standard way. However, it is observed that \DCCA\  can still be optimized 
% (KL) efficiently 
effectively as long as the gradient is estimated using a sufficiently large minibatch (\citealp{Wang_15a}, with the gradient formulas given as in \citealp{Andrew_13a}). Intuitively, this approach works because a large minibatch of samples contains sufficient information for estimating the covariance matrices. We provide empirical analysis about the optimization of \DCCA\ (in comparison with KCCA) in Section~\ref{sec:expt_opt} and its theoretical analysis is given in Appendix~\ref{sec:append-sgd}.

\subsection{Deep canonically correlated \autoencoders\ (\DCCAE)}
\label{sec:DCCAE}

Inspired by both CCA and reconstruction-based objectives, we propose a new model that consists of two \autoencoders\ and optimizes the combination of canonical correlation between the learned 
% (KL) bottleneck
``bottleneck'' representations and the reconstruction errors of the \autoencoders. In other words, we optimize the following objective  
%\kl{is there no $(1-\lambda)$ on the first term?  that makes it harder to reason about the scale of $\lambda$}\weiran{I will mention that a bit in the first experiment.}
\begin{align} % \textstyle
\min\limits_{\W_\f,\W_g,\W_\p,\W_\q,\U,\V} \quad& -\frac{1}{N}\trace{\U^\top \f(\X)  \g(\Y)^\top \V} \nonumber \\
&  + \frac{\lambda}{N}  \sum_{i=1}^N \norm{\x_i-\p(\f(\x_i))}^2 + \norm{\y_i-\q(\g(\y_i))}^2\label{e:dccae} \\
\text{s.t.}\quad  &   \text{the same constraints as in } \eqref{e:dcca}, \nonumber
% \nonumber \\  & + \gamma (\norm{\W_\f}^2 + \norm{\W_\p}^2 + \norm{\W_\q}^2), 
\end{align}
where $\lambda>0$ is a trade-off parameter.
Alternatively, this approach can be seen as adding an \autoencoder\ regularization term to \DCCA. We call this approach deep canonically correlated \autoencoders\ (\DCCAE).  
% (KL)
Fig.~\ref{f:models} (c) shows a schematic representation of the approach.
%%Weiran: we shall revisit this point.
% , which may help to mitigate the issue mentioned in Section~\ref{sec:DCCA} of the uncorrelatedness constraints being still too weak.  %%Weiran:we need to reconsider this point., since \autoencoders\ tend to learn independent features~\cite{Bengio09a}. \klcomment{I added prev sent, but it def needs some better wording} \wwcomment{It is hard to verify this.}  We call this model the deep canonically correlated \autoencoders, denoted \DCCAE\ in the following. As in \DCCA, we use $\U^\top \f(\cdot)$ as the learned projection mapping for testing. %%Weiran: what kind of equations do you have in mind here? \klcomment{include equation for completeness?}
\paragraph{Optimization} We apply stochastic optimization to the \DCCAE\ objective. 
Notice obtaining good stochastic estimates of the gradient for the correlation and autoencoder terms may involve different minibatch sizes.  We explore the minibatch sizes for each term separately (by training DCCA and an autoencoder) and select them using a validation set. %% It is worth further investigating the interplay between the terms and their gradients for optimization.
The stochastic gradient is then the sum of the gradient for the \DCCA\ term (usually estimated using large minibatches) and the gradient for the \autoencoder\ term (usually estimated using small minibatches).

\paragraph{Interpretations}

CCA maximizes the mutual information between the projected views for certain distributions \citep{Borga01a}, while training an \autoencoder\ to minimize reconstruction error amounts to maximizing a lower bound on the mutual information between inputs and learned features \citep{Vincen_10a}. The \DCCAE\ objective offers a trade-off between the information captured in the (input, feature) mapping within each view on the one hand, and the information in the (feature, feature) relationship across views.  

\subsection{Correlated \autoencoders\ (\CorrAE)}

% (KL) In the next model, we replace the CCA term in the \DCCAE\ objective with the sum of the scalar correlations between the pairs of learned dimensions across views, which is an alternative measure of agreement between views. In other words, the feature dimensions within each language~\SideNoteRA{Did you mean to say ``view'' here?} are \emph{not} constrained to be uncorrelated with each other.  
In the next approach, we remove the uncorrelatedness constraints from the \DCCAE\ objective, leaving only the sum of scalar correlations between pairs of learned dimensions and the reconstruction error term.  This approach is intended to test how important the original CCA constraints are. 
We call this model correlated \autoencoders\ (\CorrAE), 
% (KL) shown in 
also represented by Fig.~\ref{f:models} (c).
Its objective can be written as 
\begin{align} % \textstyle
\min\limits_{\W_\f,\W_g,\W_\p,\W_\q,\U,\V} \quad& -\frac{1}{N}\trace{\U^\top \f(\X)  \g(\Y)^\top \V} \nonumber \\
& + \frac{\lambda}{N}  \sum_{i=1}^N \norm{\x_i-\p(\f(\x_i))}^2 + \norm{\y_i-\q(\g(\y_i))}^2 \label{e:corrae} \\
\text{s.t.} \quad & \uu_i^\top \f(\X) \f(\X)^\top \uu_i=\vv_i^\top \g(\Y) \g(\Y)^\top \vv_i=N, \quad 1\le i \le L. \nonumber
% \nonumber \\  & + \gamma (\norm{\W_\f}^2 + \norm{\W_\p}^2 + \norm{\W_\q}^2), 
\end{align}
where $\lambda>0$ is a trade-off parameter.  It is clear that the constraint set in \eqref{e:corrae} is a relaxed version of that of \eqref{e:dccae}.
Later we demonstrate that this difference results in a large performance gap. We apply the same optimization strategy of \DCCAE\ to \CorrAE. 

\CorrAE\ is similar to 
%\kl{but different from?  how?}\weiran{In my opinion it is the same model but somehow we decided not to say that in the ICML submission} 
the model of 
% (KL)
\citet{Chandar_14a,Chandar_15a}, who try to learn vectorial word representations using parallel corpora from two languages.  They use a DNN in each view (language) to predict a bag-of-words representation of the input sentences, or that of the paired sentences from the other view, while encouraging the learned bottleneck layer representations to be highly correlated.

\subsection{Minimum-distance \autoencoders\ (\DistAE)} 
\label{sec:DistAE}

The CCA objective can be seen as minimizing the distance between the learned projections of the two views, while satisfying the whitening constraints for the projections~\citep{Hardoon_04a}. The constraints complicate the optimization of CCA-based objectives, as pointed out above.  This observation motivates us to consider additional objectives that decompose into sums over training examples, while maintaining the intuition of the CCA objective as a reconstruction error between two mappings.  Here we consider two variants that we refer to as minimum-distance \autoencoders\ (\DistAE).

The first variant \DistAE-1 optimizes the following objective:
\begin{align} % \textstyle
\min\limits_{\W_\f,\W_\g,\W_\p,\W_\q} \quad & \frac{1}{N} \sum_{i=1}^N \frac{\norm{\f(\x_i)-\g(\y_i)}^2}{\norm{\f(\x_i)}^2+\norm{\g(\y_i)}^2} \nonumber\\
& + \frac{\lambda}{N}  \sum_{i=1}^N  \norm{\x_i-\p(\f(\x_i))}^2 + \norm{\y_i-\q(\g(\y_i))}^2
\end{align}
which is a weighted combination of reconstruction errors of two \autoencoders\ and the average discrepancy between the projected sample pairs. The denominator of the discrepancy term is used to keep the optimization from improving the objective by simply scaling down the projections (although they can never become identically zero due to the reconstruction terms). 
This objective is unconstrained and is the 
% (KL) 
empirical average of the loss incurred at each training sample,
%~\SideNoteRA{The objective is the empirical average of the loss incurred at each training sample. What is the expectation w.r.t., the empirical distribution?}\weiran{done.} 
so normal SGD applies using small (or any size) minibatches.

The second variant \DistAE-2 optimizes a somewhat different objective: 
%\kl{no denominator in this one.  remind me why it's not needed?} \weiran{Jeff did not use denominator in this objective.}
\begin{align} % \textstyle
\min\limits_{\W_\f,\W_\g,\W_\p,\W_\q,\A,\bbb} \quad & \frac{1}{N} \sum_{i=1}^N {\norm{\f(\x_i)-\A\g(\y_i)-\bbb}^2} \nonumber\\
& + \frac{\lambda}{N}  \sum_{i=1}^N  \norm{\x_i-\p(\f(\x_i))}^2 + \norm{\y_i-\q(\g(\y_i))}^2
\end{align}
where $\A\in\bbR^{L\times L}$ and $\bbb\in \bbR^{L}$. The underlying intuition is that the representation of the primary view can be linearly predicted from the representation of the other view. This relationship is motivated by the fact that when
$\g(\y)$ and $\f(\x)$ are perfectly linearly correlated, then there exists 
an affine transformation that can map from one to the other.
This approach, hence, alleviates the burden on $\g(\y)$ being simultaneously predictive of the output and close to $\f(\x)$ by itself.

\section{Related work}
\label{sec:related}

Here we focus on related work on multi-view feature learning using neural networks and the kernel extension of CCA. 

\subsection{Neural network feature extraction using CCA-like objectives}
\label{sec:related_nncca}

There have been several approaches to multi-view representation learning using neural networks with an objective similar to that of CCA. Under the assumption that the two views share a common cause (\eg, depth is the common cause for adjacent patches of images), \citet{BeckerHinton92a} propose to maximize a sample-based estimate of mutual information between outputs of neural networks for the two views. %% \kl{average over what?}
In this work, the 1-dimensional output of each network can be considered to be ``internal teaching signals'' for the other. It is less straightforward to extend their sample-based estimator of mutual information to higher dimensions, while the CCA objective is always closely related to maximal mutual information between the views (under the joint multivariate Gaussian distributions of the inputs, see~\citealp{Borga01a}). 

\citet{LaiFyfe99a} propose to optimize the correlation (rather than canonical correlation) between the outputs of networks for each view, subject to scale constraints on each output dimension.
%\kl{how do they keep from learning the same dimension over and over?} \weiran{they did not mention this.} \kl{would be nice to try to figure that out and mention if possible} 
Instead of directly solving this constrained formulation, the authors apply Lagrangian relaxation and solve the resulting unconstrained objective using SGD. Note, however, that their objective is different from that of CCA, as there are no constraints that the learned dimensions within each view be uncorrelated. \cite{Hsieh00a} proposes a neural network-based model involving three modules: one module for extracting a pair of maximally correlated one-dimensional features for the two views; and a second and third module for reconstructing the original inputs of the two views from the learned features.  In this model, the feature dimensions can be learned one after another, each learned using as input the reconstruction residual from previous dimensions. This approach is intuitively similar to \CorrAE~and \DCCA, but the three modules are each trained separately, so there is no unified objective.
%\kl{seems intuitively similar to CorrAE.  this is worth stating if so.} \weiran{CorrAE does not learn the dimensions one by one; CorrAE jointly optimizes over all three modules.} \kl{added a phrase about that}

\citet{Kim_12b} propose an algorithm that first uses deep belief networks and the \autoencoder\ objective to extract features for two languages independently, and then applies linear CCA to the learned features (activations at the bottleneck layer of the \autoencoders) to learn the final representation. In this two-step approach, the DNN weight parameters are not updated to optimize the CCA objective.

% (KL) Note that t
There has also been work on multi-view feature learning using deep Boltzmann machines \citep{SrivasSalakh14a,Sohn_14a}. The models in this work stack several layers of restricted Boltzmann machines (RBM) to represent each view, with an additional top layer that provides the joint representation. These are probabilistic graphical models, for which the maximum likelihood objective is intractable and the training procedures are more complex. Although probabilistic models have some advantages (e.g., dealing with missing values and generating samples in a natural way), DNN-based models have the advantages of a tractable objective and efficient training.

\subsection{Kernel CCA}
\label{sec:related_kcca}

Formulation \eqref{e:dcca} encompasses several variants. Obviously, \eqref{e:dcca} reduces to 
% (KL) exactly 
linear CCA when $\f$ and $\g$ are identity mappings. One popular nonlinear approach is kernel CCA (KCCA, \citealp{LaiFyfe00a,Akaho01a,Melzer_01a,BachJordan02a,Hardoon_04a}), corresponding to choosing $\f(\x)$ and $\g(\y)$ to be feature maps induced by positive definite kernels $k_x(\cdot,\cdot)$ and $k_y(\cdot,\cdot)$, respectively (e.g., Gaussian RBF kernel $k(\aa,\bbb)=e^{-\norm{\aa-\bbb}^2/2 s^2}$ where $s$ is the kernel width). %%From the representer theorem of an RKHS~\citep{SchoelSmola01a}, we know that the final learned mappings resulting from solving \eqref{e:dcca} can be written as 
\weiran{I removed the statements regarding representer theorem of an RKHS~\citep{SchoelSmola01a}, it follows from a two-line derivation.}
Following the derivation of~\citet[Section~3.2.1]{BachJordan02a}, it suffices to consider projection mappings of the form
$\tilde{\f}(\x)=\sum_{i=1}^N {\balpha_i k_x(\x,\x_i)}$ and $\tilde{\g}(\y)=\sum_{i=1}^N {\bbeta_i k_y(\y,\y_i)}$, where $\balpha_i, \bbeta_i \in \bbR^L$, $i=1,\dots,N$.
%$\f(\x)=k_x(\x,\cdot)$ and $\g(\y)=k_y(\y,\cdot)$~\SideNoteRA{This seems wrong -- where does the free variable that dots represent appear on the LHS. I guess you meant to say $f(x)=\sum_{i=1}^N \alpha_i k(x,x_i)$.} where $k_x$ and $k_y$ are positive-definite kernel functions (e.g., Gaussian RBF kernel $k(\aa,\b)=e^{-\norm{\aa-\b}^2/2 s^2}$ where $s$ is the kernel width). From the representer theorem of RKHS~\citep{SchoelSmola01a}, we know that the solution of \eqref{e:dcca} has the form $\U=\sum_{i=1}^N {\balpha_i k_x(\x_i,\cdot)}$ and $\V=\sum_{i=1}^N {\bbeta_i k_y(\y_i,\cdot)}$~\SideNoteRA{Same comment as above regarding the dots on RHS} where $\balpha_i, \bbeta_i \in \bbR^L$, $i=1,\dots,N$, and the final CCA mapping can be written as $\tilde{\f}(\x)=\sum_{i=1}^N {\balpha_i k_x(\x,\x_i)}$ and $\tilde{\g}(\y)=\sum_{i=1}^N {\bbeta_i k_y(\y,\y_i)}$.

Denote by $\K_x$ the $N \times N$ kernel matrix for view 1, \ie, $(\K_x)_{ij}=k_x (\x_i, \x_j)$, and similarly denote by $\K_y$ the kernel matrix for view 2. Then \eqref{e:dcca} can be written as a problem in the coefficient matrices $\A=[\balpha_1,\dots,\balpha_L]^\top\in \bbR^{N\times L}$ and $\B=[\bbeta_1,\dots,\bbeta_L]^\top\in \bbR^{N\times L}$:
\begin{align} \label{e:kcca}
 & \qquad \max_{\A,\B} \quad \frac{1}{N} \trace{\A^\top \K_x  \K_y \B} \\
\text{s.t.} \quad & \A^\top \left(\frac{1}{N} {\K_x}^2  + r_x \K_x\right) \A = \B^\top \left(\frac{1}{N} {\K_y}^2 + r_y \K_y \right) \B = \I, \nonumber\\
& \balpha_i^\top \K_x \K_y \bbeta_j=0, \quad \text{for}\quad i\neq j. \nonumber
\end{align}
Therefore we can conveniently work with the kernel (Gram) matrices instead of possibly infinite dimensional RKHS space and optimize directly over the coefficients. Following a derivation similar to that of CCA, one can show that the optimal solution $(\A^*, \B^*)$ satisfies
\begin{align}
(\K_x + N r_x\I)^{-1} \K_y (\K_y + N r_y\I)^{-1} \K_x \A^* &= \A^* \bSigma^2, \\
(\K_y + N r_y\I)^{-1} \K_x (\K_x + N r_x\I)^{-1} \K_y \B^* &= \B^* \bSigma^2,
\end{align}
%\weiran{maybe keep only one of the above two equations.}
where $\bSigma$ is a diagonal matrix containing the leading correlation coefficients. Thus the optimal projection can be obtained by solving an eigenvalue problem of size $N\times N$.

We can make a few observations on the KCCA method. First, the non-zero regularization parameters $(r_x, r_y)>0$ are needed to avoid trivial solutions and correlations. Second, in KCCA, 
the mappings are not optimized over except that the kernel parameters are usually cross-validated.  
\weiran{clarify the previous sentence.}%
\weiran{I do not remember what was going on with this comment. Shall we say ``the feature mappings $\f$ and $\g$ are fixed once a kernel and its parameters are chosen, and thus the mappings are not fully optimized over except that the kernel parameters are usually cross-validated.'' ?? But I guess the statement regarding function classes is also true for DNN to some extend. Once the architecture of DNN is chosen, it limits the DNN's power.} \kl{I think we can remove these comments}
Third, exact KCCA is computationally challenging for large data sets as it would require performing an eigendecomposition of an $N\times N$ matrix %%due to the $N\times N$ eigenvalue problem,~\raman{(not sure if ``$N\times N$ eigenvalue problem'' is a standard terminology, you may want to say ``as it would require performing an eigendecomposition of an $N\times N$ matrix'')}
which is expensive both in memory (storing the kernel matrices) and time (solving the $N\times N$ eigenvalue systems naively costs $\calO(N^3)$). 

Various kernel approximation techniques have been proposed to scale up kernel machines. Two widely used approximation techniques are random Fourier features \citep{Lopez_14b} and the Nystr\"om approximation \citep{WilliamSeeger01a}. 
In random Fourier features, we randomly sample $M$ $D_x$ (respectively $D_y$)-dimensional vectors from a Gaussian distribution and map the original inputs to $\bbR^M$ by computing the dot products with the random samples followed by an elementwise cosine.  That is, $\f(\x)=[\cos(\w_1 \x+b_1),\dots,\cos(\w_M \x+b_M)]$ where $\w_i$ is sampled from $\calN(\0,\I/s^2)$ ($s$ is the width of the Gaussian kernel we wish to approximate), and $b_i$ is sampled uniformly from $[0,2\pi]$.  Inner products between transformed samples then approximate kernel similarities between original inputs. 
In the Nystr\"om approximation, we randomly select $M$ training samples $\tilde{\x}_1,\dots,\tilde{\x}_M$ and construct the $M\times M$ kernel matrix $\tilde{\K}_x$ based on these samples, i.e.~$(\tilde{\K}_x)_{ij}=k_x(\tilde{\x}_i,\tilde{\x}_j)$.  We compute the eigenvalue decomposition $\tilde{\K}_x=\tilde{\RR} \tilde{\bLambda} \tilde{\RR}^\top$, and then the $N\times N$ kernel matrix for the entire training set can be approximated as $\K_x \approx \C \tilde{\K}_x^{-1} \C^\top$ where $\C$ contains the columns of $\K_x$ corresponding to the selected subset, \ie, $\C_{ij}=k_x(\x_i,\tilde{\x}_j)$. This means $\K_x \approx \left(\C \tilde{\RR} \tilde{\bLambda}^{-1/2}\right)\left(\C \tilde{\RR} \tilde{\bLambda}^{-1/2}\right)^\top $, so we can use the $M\times N$ matrix $\F=\left(\C \tilde{\RR} \tilde{\bLambda}^{-1/2}\right)^\top$ as a new feature representation for view 1, where inner products between samples approximate kernel similarities. % We extract such features for both views, and apply linear CCA to them to approximate the KCCA solution \citep{Yang_12e,Lopez_14b}. 

Both techniques produce rank-$M$ approximations of the kernel matrices with computational complexity $\calO(M^3 + M^2 N)$; but random Fourier features are data independent and more efficient to generate while Nystr\"om tends to work better. Other approximation techniques such as incomplete Cholesky decomposition \citep{BachJordan02a}, partial Gram-Schmidt \citep{Hardoon_04a}, incremental SVD \citep{AroraLivesc12a} have also been proposed and applied to KCCA.  However, for very large training sets, such as the ones in some of our tasks below, it remains difficult and costly to approximate KCCA well. Although recently iterative algorithms have been introduced for very large CCA
%~\SideNoteRA{CCA, not KCCA, right?.} 
problems \citep{LuFoster14a}, they are aimed at sparse matrices and do not have a natural out-of-sample extension.
%~\SideNoteRA{The writing in this subsection can be significantly improved. Currently, it seems too dense, especially how we describe randomized/Nystrom's approximation; scalable KCCA is important related work and should not be rushed, I think.}
%\kl{need to say something about our Allerton paper}

\subsection{Other related models}
\label{sec:related_other}
CCA is related to metric learning in a broad sense. In metric learning, the task is to learn a metric in the input space (or equivalently a projection mapping) such that the learned distances (or equivalently Euclidean distances in the projected space) between ``similar'' samples are small while distances between ``dissimilar'' samples are large. Metric learning often uses side information in the form of pairs of similar/dissimilar samples, which may be known {\it a priori} or derived from class labels \citep{Xing_02a,Shental_02a,Bar-Hil_05a,TsangKwok03a,SchultJoachim04a,Goldber_05a,Hoi_06a,GloberRoweis06a,Davis_07a,WeinberSaul09a}.

Note that we can equivalently write the CCA objective as (by replacing $\max$ with $\min-$ and adding $1/2$ times the left-hand side of the whitening constraints)
\begin{align} \label{e:cca2}
 \quad \min_{\U,\V} & \quad \frac{1}{N} \norm{\U^\top \F - \V^\top \G}_F^2 + \frac{r_x}{2}\norm{\U}_F^2 + \frac{r_y}{2}\norm{\V}_F^2 \\
\text{s.t.} & \quad \text{the same constraints in \eqref{e:dcca}}. \nonumber
\end{align}
In view of the above formulation, pairs of co-occurring two-view samples are mapped into similar locations in the projection, which are thus considered ``similar'' in CCA, and the whitening constraints set the 
% (KL) global
scales of the projections so that the dataset does not collapse into a constant and so that the projection dimensions are uncorrelated.
%~\SideNoteRA{Reword: It is not just a global scale but a uniform scale -- unit variance along each projected dimension; I don't think the purpose of the constraints is to simply avoid collapsing the dataset ``into a constant''.\weiran{See the statement in parenthesis}} 
Unlike the typical metric learning setting, the two-view data in CCA may come from different domains/modalities and thus each view has its own projection mapping. Also, CCA uses no information regarding ``dissimilar'' pairs of two-view data.  In this sense the CCA setting is more similar to that of \citet{Shental_02a} and \citet{Bar-Hil_05a}, which use only side information regarding groups of similar samples (``chunklets'') for single-view data.

For multi-view data, \citet{Glober_07a} propose an algorithm for learning Euclidean embeddings by defining a joint or conditional distribution of the views based on Euclidean distance in the embedding space and maximizing the data likelihood. The objective they minimize is the weighted mean of squared distances between embeddings of co-occurring pairs with a regularization term depending on the partition function of the defined distribution.
% that keeps the embeddings of all samples from collapsing. 
This model differs from CCA in the global constraints/regularization used. 

Recently, there has been increasing interest in learning (multi-view) representations using contrastive losses which aim to enforce that distances between dissimilar pairs are larger than distances between similar pairs by some margin~\citep{HermanBlunsom14a,Huang_13b}.  In case there is no ground-truth information regarding similar/dissimilar pairs (as in our setting), random sampling is typically used to generate negative pairs.  The sampling of negative pairs can be harmful in cases where the probability of mistakenly obtaining similar pairs from the sampling procedure is relatively high.  In future work it would be interesting to compare contrastive losses with the CCA objective when only similar pairs are given, and to consider the effects of such sampling in a variety of data distributions.
%In such objectives the margin keeps the representations from collapsing.  

Finally, CCA has a connection with the information bottleneck method~\citep{Tishby_99a}.  Indeed, in the case of Gaussian variables, the information bottleneck method finds the same subspace as CCA~\citep{Chechik_05a}.

\section{Experiments}
\label{sec:experiments}

We first demonstrate the proposed algorithms and related work on several multi-view feature learning tasks (Sections~\ref{sec:MNIST}--\ref{sec:NLP}). In our setting, the second view is not available during test time, so we try to learn a feature transformation of the first/primary view that captures useful information from the second view using a paired two-view training set. Then, in Section~\ref{sec:expt_opt}, we explore the stochastic optimization procedure for the DCCA objective \eqref{e:dcca}.

We focus on several downstream tasks including noisy digit image
classification, speech recognition, and word pair semantic
similarity. On these tasks, we compare the following methods in the
multi-view learning setting:
\begin{itemize}
\item \textbf{DNN-based models}, including \SVAE, \CorrAE, \DCCA, \DCCAE, and \DistAE.
\item \textbf{Linear CCA (\CCA)},  corresponding to \DCCA\ with only a linear network with no hidden layers for both views.
\item \textbf{Kernel CCA approximations}. 
Exact KCCA is computationally infeasible for our tasks since they are large; we instead implement two kernel approximation techniques, using Gaussian RBF kernels. The first implementation, denoted \textbf{\RKCCA}, uses random Fourier features \citep{Lopez_14b} and the second implementation, denoted \textbf{\NKCCA}, uses the Nystr\"om approximation \citep{WilliamSeeger01a}. As described in Section~\ref{sec:related_kcca}, in both \RKCCA/\NKCCA, we transform the original inputs to an $M$-dimensional feature space where the inner products between samples approximate the kernel similarities \citep{Yang_12e}. We apply linear CCA to the transformed inputs to obtain the approximate KCCA solution.  
\end{itemize}

\subsection{Noisy MNIST digits}
\label{sec:MNIST}

\begin{figure}[t]
\centering
\begin{tabular}{cc}
\includegraphics[width=0.45\linewidth]{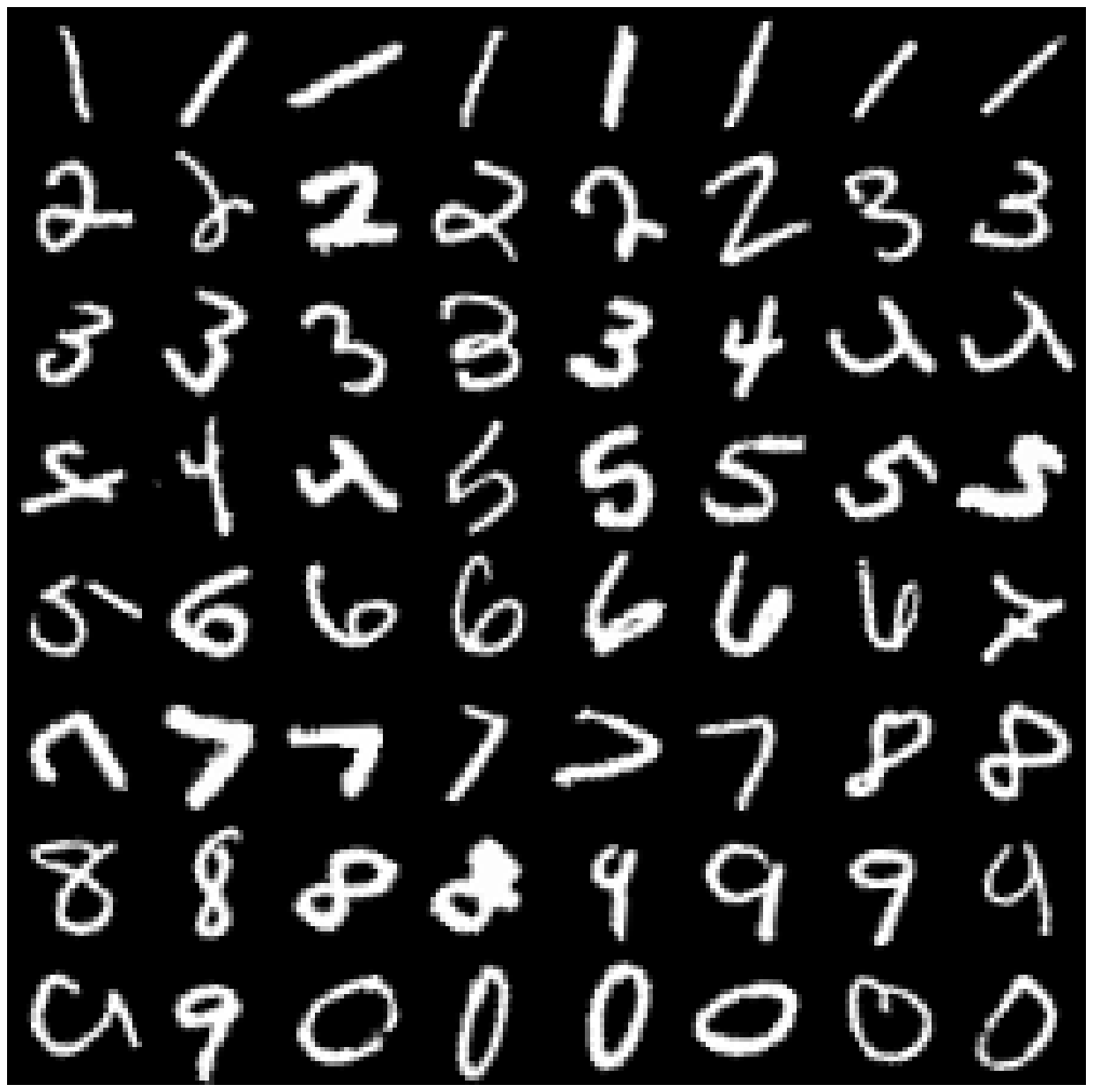} &
\includegraphics[width=0.45\linewidth]{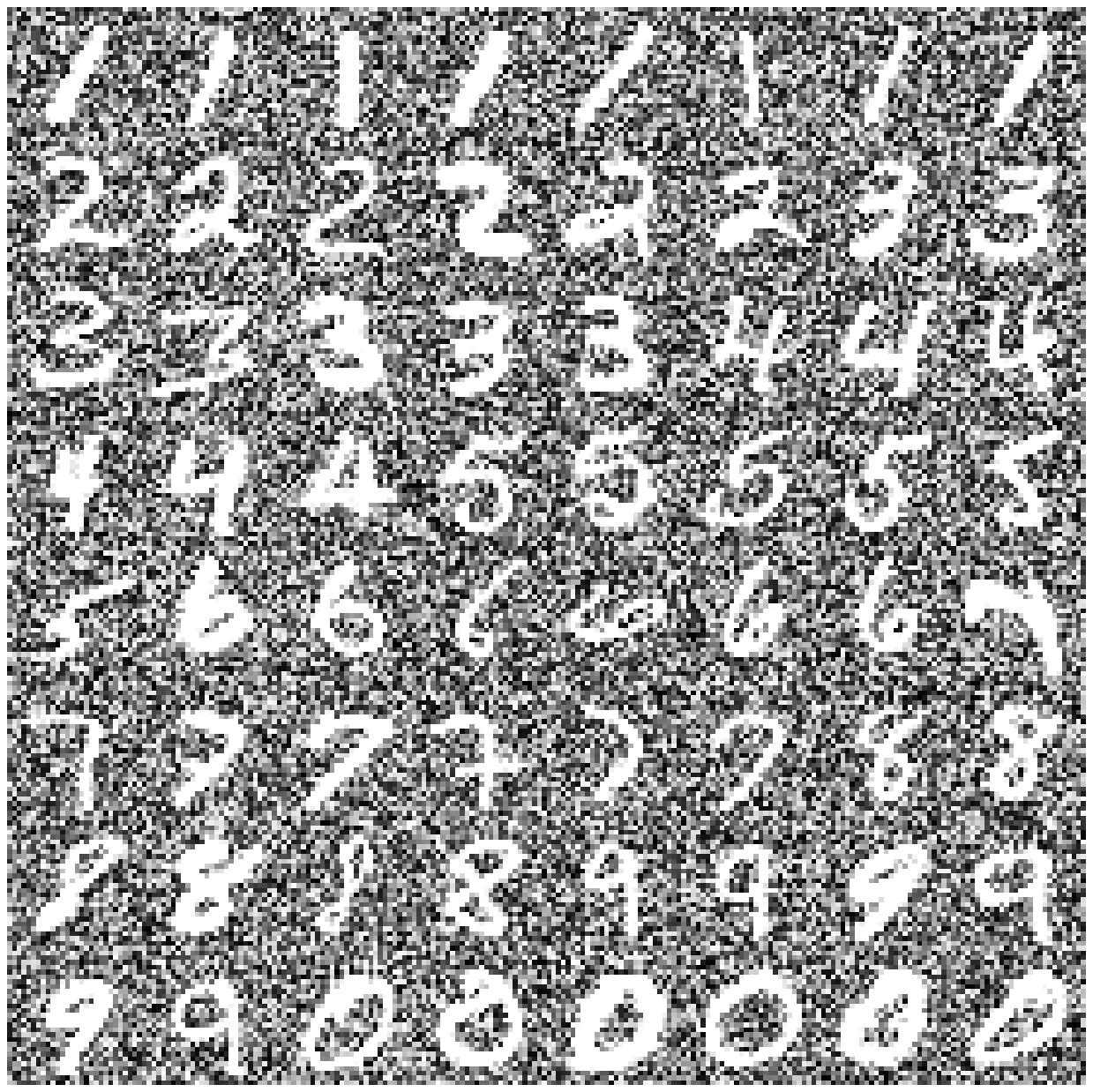}
\end{tabular}
\caption{Selection of view 1 images  (left)  and their corresponding view 2 images (right) from our noisy MNIST dataset.}
\label{f:mnist-inputs}
\end{figure}

In this task, we generate two-view data using the MNIST dataset \citep{Lecun_98a}, which consists of $28\times 28$ grayscale digit images, with $60K$/$10K$ images for training/testing. We generate a more challenging version of the dataset as follows (see Fig.~\ref{f:mnist-inputs} for examples).
We first rescale the pixel values to $[0,1]$ (by dividing the original values in $[0,255]$ by $255$).
%~\SideNoteRA{By dividing by the maximum pixel value in each image or by the upper end of the range of pixel values, say 255? \weiran{added}} 
We then randomly rotate the images at angles uniformly sampled from $[-\pi/4, \pi/4]$ and the resulting images are used as view 1 inputs. For each view 1 image, we randomly select an image 
\jeff{any chance the same base image is chosen, or is it a random sample from all images of the same identity except for the current image? Add wording saying so here.}\weiran{That is a good point. I believe I did not rule out the current image for creating view 2, but the chance of getting the same image is about 1/10000. So I think this is not a problem.}
of the same identity (0-9) from the original dataset, add independent random noise uniformly sampled from  $[0,1]$ to each pixel, and truncate the pixel final values to $[0,1]$ to obtain the corresponding view 2 sample. The original training set is further split into training/tuning sets of size $50K$/$10K$.

Since, given the digit identity, observing a view 2 image does not provide any information about the corresponding view 1 image, a good multi-view learning algorithm should be able to extract features that disregard the noise.  %\kl{made some edits here to bring the SVM mention earlier}  
We measure the class separation in the learned feature spaces via both clustering and classification performance.  First, we cluster the projected view 1 inputs into $10$ clusters and evaluate how well the clusters agree with ground-truth labels. 
We use spectral clustering \citep{Ng_02a} so as to account for possibly non-convex cluster shapes. Specifically, we first build a $k$-nearest-neighbor graph on the projected view 1 tuning/test samples with a binary weighting scheme (edges connecting neighboring samples have a constant weight of $1$), and then embed these samples in $\bbR^{10}$ using eigenvectors of the normalized graph Laplacian, and finally run $K$-means in the embedding to obtain a hard partition of the samples. In the last step,  $K$-means is run $20$ times with random initialization and the run with the best $K$-means objective is used. The size of the neighborhood graph $k$ is selected from $\{5,10,20,30,50\}$ using the tuning set.  We measure clustering performance with two criteria, clustering accuracy (AC) and normalized mutual information (NMI)~\citep{Xu_03a,Mannin_08a}. %%\weiran{I traced the reference from \citet{Cai_05a}; alternatively, we could add a general reference, e.g., \citet{Mannin_08a}}. \kl{added both}
The AC is defined as
% (KL) follows
\begin{equation}
% (KL) AC=\frac{\sum_{i=1}^N \delta(s_i,map(r_i))}{N},
AC=\frac{\sum_{i=1}^N \delta(s_i,\text{map}(r_i))}{N},
\end{equation}
where $s_i$ is the ground truth label of sample $\x_i$, $r_i$ is the cluster label of $\x_i$, and 
% (KL) $map(r_i)$ 
$\text{map}(r_i)$ 
is an optimal permutation mapping between cluster labels and ground truth labels obtained by solving a linear assignment problem using the Hungarian algorithm~\citep{Munkres57a}.  The NMI considers the probability distribution over the ground truth label set $C$ and cluster label set $C^\prime$ jointly, and is defined by the following set of equations
% \begin{gather}
% MI(C,C^\prime)=\sum_{c_i\in C} \sum_{c_j^\prime \in C^\prime} p(c_i,c^\prime_j) \log_2 \frac{p(c_i,c^\prime_j)}{p(c_i)p(c^\prime_j)} \\
% H(C)=-\sum_{c_i\in C} p(c_i) \log_2 p(c_i),\qquad
% H(C^\prime)=-\sum_{c^\prime_j\in C^\prime} p(c^\prime_j) \log_2 p(c^\prime_j), \\
% NMI(C,C^\prime) = \frac{MI(C,C^\prime)}{\max(H(C),H(C^\prime))},
% \end{gather}
\begin{gather}
NMI(C,C^\prime) = \frac{MI(C,C^\prime)}{\max(H(C),H(C^\prime))} \\
\text{where}\qquad\qquad 
MI(C,C^\prime)=\sum_{c_i\in C} \sum_{c_j^\prime \in C^\prime} p(c_i,c^\prime_j) \log_2 \frac{p(c_i,c^\prime_j)}{p(c_i)p(c^\prime_j)} \\
H(C)=-\sum_{c_i\in C} p(c_i) \log_2 p(c_i),\qquad
H(C^\prime)=-\sum_{c^\prime_j\in C^\prime} p(c^\prime_j) \log_2 p(c^\prime_j),
\end{gather}
where $p(c_i)$ is interpreted as the probability of a sample having label $c_i$ and $p(c_i,c^\prime_j)$ the probability of a sample having label $c_i$ while being assigned to cluster $c^\prime_j$ (all of which can be computed by counting the samples in the joint partition of $C$ and $C'$%\SideNoteRA{Not sure what you mean by ``joint partition''}
).
Larger values of these criteria (with an upper bound of 1) indicate better agreement between the clustering and ground-truth labeling. 

Each algorithm has hyperparameters that are selected using the tuning set.  The final dimensionality $L$ is selected from $\{5,10,20,30,50\}$. For CCA, the regularization parameters $r_x$ and $r_y$ are selected via grid search. For KCCAs, we fix both $r_x$ and $r_y$ at a small positive value of $10^{-4}$ (as suggested by \citet{Lopez_14b}, \RKCCA\ is robust to $r_x$, $r_y$%\SideNoteRA{I think we should still tune for regularization parameter given that all CCA methods including linear CCA are sensitive to regularization parameter. The observation in Lopez paper may be empirical and specific to their dataset/task. \weiran{I found it to be not important when the data set is large and the covariance matrices are far from being singular. In such scenarios, we can use small/no regularization.}}
), and do grid search for the Gaussian kernel width over $\{2,3,4,5,6,8,10,15,20\}$ for view 1 and $\{2.5, 5, 7.5, 10, 15, 20, 30\}$ for view 2 at rank $M=5,000$, and then test with $M=20,000$. For DNN-based models, the feature mappings $(\f, \g)$ are implemented by networks of $3$ hidden layers, each of $1024$ sigmoid units, and a linear output layer of $L$ units; reconstruction mappings $(\p, \q)$ are implemented by networks of $3$ hidden layers, each of $1024$ sigmoid units, and an output layer of $784$ sigmoid units. We fix $r_x=r_y=10^{-4}$ for \DCCA\ and \DCCAE. % \raman{(We should tune these as well)}
For \SVAE/\CorrAE/\DCCAE/\DistAE\ we select the trade-off parameter $\lambda$ via grid search over \{0.001, 0.01, 0.1, 1, 10\}, allowing a trade-off between the correlation term (with value in $[0,L]$) and the reconstruction term (varying roughly in the range $[60,110]$ as the reconstruction error for the second view is always large).  The networks $(\f, \p)$ are pre-trained in a layerwise manner using restricted Boltzmann machines \citep{HintonSalakh06a} and similarly for $(\g, \q)$ with inputs from the corresponding view.
% Pre-training is a useful way of retaining information of the inputs, and we find it to speedup DCCA\_AE training (the pre-trained networks already reconstruct the inputs reasonably well before we start optimizing the regularized objective).

\begin{table}[t]
\centering
\caption{Performance of several representation learning methods on the noisy MNIST digits test set.  Performance measures are clustering accuracy (AC), normalized mutual information (NMI) of clustering, and classification error rates of a linear SVM on the projections. The selected feature dimensionality $L$ is given in parentheses. Results are averaged over $5$ random seeds.}
\label{t:mnist_acc_nmi}
\begin{tabular}{l|c|c|c}
\hline
Method & AC (\%) & NMI (\%) & Error (\%) \\
\hline \hline
Baseline & 47.0 & 50.6 & 13.1\\
\CCA\ ($L=10$) & 72.9 & 56.0 & 19.6 \\
\SVAE\ ($L=10$) & 64.0 & 69.0 & 11.9 \\
\CorrAE\ ($L=10$) & 65.5 & 67.2 & 12.9 \\
\DistAE-1\ ($L=20$) & 53.5 & 60.2 & 16.0 \\
\DistAE-2\ ($L=10$) & 62.6 & 65.6 & 15.2 \\
% RKCCA ($L=10$) & 93.6 & 85.1 & \hspace{.4em} 5.9 \\
% NKCCA ($L=10$) & 94.2 & 86.5 &\hspace{.4em}  5.2 \\
\RKCCA ($L=10$) & 94.7 & 87.3 & \hspace{.4em} 5.1 \\
\NKCCA ($L=10$) & 95.1 & 88.3 & \hspace{.4em}  4.5 \\
\DCCA\ ($L=10$) & \textbf{97.0} & \textbf{92.0} & \hspace{.4em} \textbf{2.9} \\
\DCCAE\ ($L=10$) & \textbf{97.5} & \textbf{93.4} & \hspace{.4em} \textbf{2.2} \\
\hline
\end{tabular}
\end{table}

\begin{figure*}[htp]
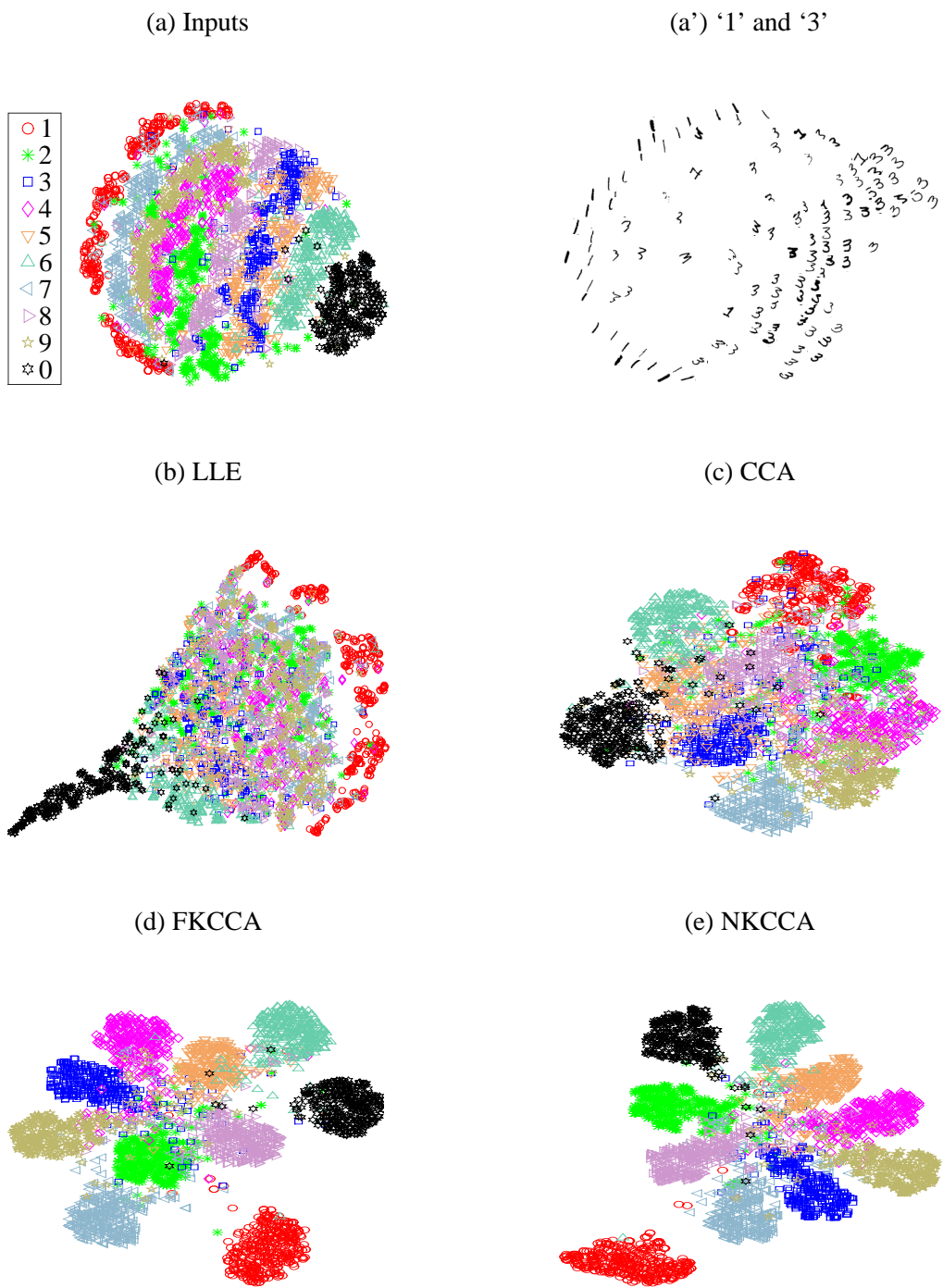

\centering
\begin{tabular}{@{}c@{\hspace{0.16\linewidth}}c@{}}
(a) Inputs & (a') `1' and `3' \\[5ex]
\psfrag{1}[][l]{1}
\psfrag{2}[][l]{2}
\psfrag{3}[][l]{3}
\psfrag{4}[][l]{4}
\psfrag{5}[][l]{5}
\psfrag{6}[][l]{6}
\psfrag{7}[][l]{7}
\psfrag{8}[][l]{8}
\psfrag{9}[][l]{9}
\psfrag{0}[][l]{0}
\includegraphics[width=0.36\linewidth,height=0.27\linewidth]{MNIST_INPUT_2D.eps} &
\includegraphics[width=0.36\linewidth,height=0.27\linewidth]{MNIST_INPUT_1A3.eps} \\[5ex]
(b) LLE & (c) CCA \\[5ex]
\includegraphics[width=0.36\linewidth,height=0.27\linewidth]{MNIST_LLE_2D.eps} & 
\includegraphics[width=0.36\linewidth,height=0.27\linewidth]{MNIST_LCCA_2D.eps} \\[5ex]
(d) \RKCCA & (e) \NKCCA \\[5ex]
\includegraphics[width=0.36\linewidth,height=0.27\linewidth]{MNIST_RKCCA_2D.eps} &
\includegraphics[width=0.36\linewidth,height=0.27\linewidth]{MNIST_NKCCA_2D.eps} 
\end{tabular}
\caption{$t$-SNE embedding of the projected test set of noisy MNIST digits by different algorithms. Each sample is denoted by a marker located at its embedding coordinates and color-coded by its identity, except in (a') where the actual input images are shown for samples of classes `1' and `3'. Neither the feature learning nor $t$-SNE is aware of the class information. We run the $t$-SNE implementation of \citet{VladymCarreir12a} on projected test images for $300$ iterations with a perplexity of $20$.}
%~\raman{Make the legend bigger.}\kl{use more colors so that each digit gets its own color.  It's hard to see the difference between the markers, and between the red and magenta.  Maybe use the digits as markers?} \weiran{I think digits takes more space than markers (unless we want to significantly enlarge all figures), and not as distinctive as colors.}
\label{f:mnist_all_1}
\end{figure*}

\begin{figure*}[htp]
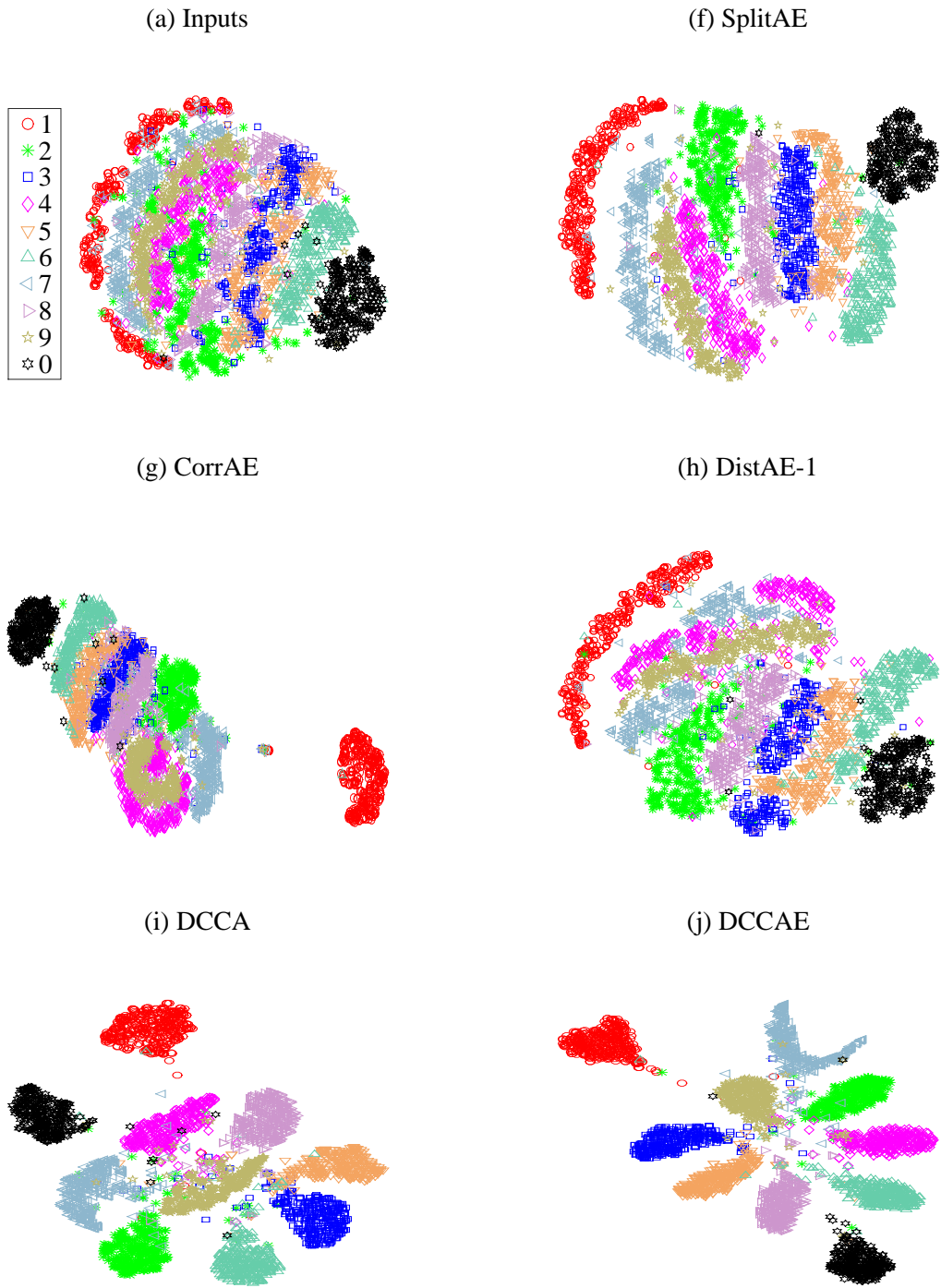

\centering
\begin{tabular}{@{}c@{\hspace{0.16\linewidth}}c@{}}
(a) Inputs & (f) \SVAE  \\[5ex]
\psfrag{1}[][l]{1}
\psfrag{2}[][l]{2}
\psfrag{3}[][l]{3}
\psfrag{4}[][l]{4}
\psfrag{5}[][l]{5}
\psfrag{6}[][l]{6}
\psfrag{7}[][l]{7}
\psfrag{8}[][l]{8}
\psfrag{9}[][l]{9}
\psfrag{0}[][l]{0}
\includegraphics[width=0.36\linewidth,height=0.27\linewidth]{MNIST_INPUT_2D.eps} &
\includegraphics[width=0.36\linewidth,height=0.27\linewidth]{MNIST_SVAUTO_2D.eps} \\[5ex]
(g) \CorrAE & (h) \DistAE-1 \\[5.5ex]
\includegraphics[width=0.36\linewidth,height=0.23\linewidth]{MNIST_CA_2D.eps} &
\includegraphics[width=0.36\linewidth,height=0.27\linewidth]{MNIST_DIS_2D.eps} \\[5ex] 
(i) \DCCA & (j) \DCCAE \\[5ex]
\includegraphics[width=0.36\linewidth,height=0.27\linewidth]{MNIST_DCCA_2D.eps} &
\includegraphics[width=0.36\linewidth,height=0.27\linewidth]{MNIST_DCCA_AE_2D.eps}
\end{tabular}
\caption{$t$-SNE embedding of the projected test set of noisy MNIST digits by different algorithms. Each sample is denoted by a marker located at its coordinates of embedding and color coded by its identity. Neither the feature learning algorithms nor $t$-SNE is aware of the class information.} %%We have run the $t$-SNE implementation of \citet{VladymCarreir12a} on the $10,\!000$ projected test images for $300$ iterations with a perplexity of $20$.}
%~\raman{Make the legend bigger.}
\label{f:mnist_all_2}
\end{figure*}

% \begin{table}[t]\centering
% \caption{MNIST test performance of \DCCAE\ for different $\lambda$ at $L=10$}.
% \label{t:varylambda}
% \begin{tabular}{@{}|c||c|c|c|c|c|c|c|c|c|c|c|@{}}\hline
% & $\lambda=0$ & $\lambda=0.0001$ & $\lambda=0.0002$ & $\lambda=0.0005$ & $\lambda=0.001$ &  $\lambda=0.002$ & $\lambda=0.005$  & $\lambda=0.01$ & $\lambda=0.02$ & $\lambda=0.05$ & $\lambda=0.1$ & $\lambda=1$ & $\lambda=10$ \\ \hline \hline
% AC  & 97.0 & 97.4 & 97.4 & 97.5 & 97.4 & 97.4 & 97.3 & 97.3 & 94.1 & 82.2 & 81.1 & 80.0 & 62.1 \\
% NMI & 92.0 & 93.1 & 93.0 & 93.4 & 93.2 & 93.0 & 92.8 & 92.8 & 91.2 & 85.8 & 84.1 & 81.8 & 63.9 \\
% \end{tabular}
% \end{table}

\begin{figure}[t]
\psfrag{lambda}[c][c]{$\lambda$ ($\log$ scale)}
\psfrag{accuracy}[c][c]{Clustering accuracy (\%)}
\psfrag{1}[c][c][0.8]{$0$}
\psfrag{2}[c][c][0.8]{\hspace{-.6em}$.0001$}
\psfrag{3}[c][c][0.8]{\hspace{-.1em}$.0002$}
\psfrag{4}[c][c][0.8]{$.0005$}
\psfrag{5}[c][c][0.8]{$.001$}
\psfrag{6}[c][c][0.8]{$.002$}
\psfrag{7}[c][c][0.8]{$.005$}
\psfrag{8}[c][c][0.8]{$.01$}
\psfrag{9}[c][c][0.8]{$.02$}
\psfrag{10}[c][c][0.8]{$.05$}
\psfrag{11}[c][c][0.8]{$.1$}
\psfrag{12}[c][c][0.8]{$1$}
\psfrag{13}[c][c][0.8]{$10$}
\centering%\kl{removed the collapsing phrase as I think it may be distracting -- there is more to the regularization term than just not collapsing the data}

\includegraphics[width=0.7\linewidth]{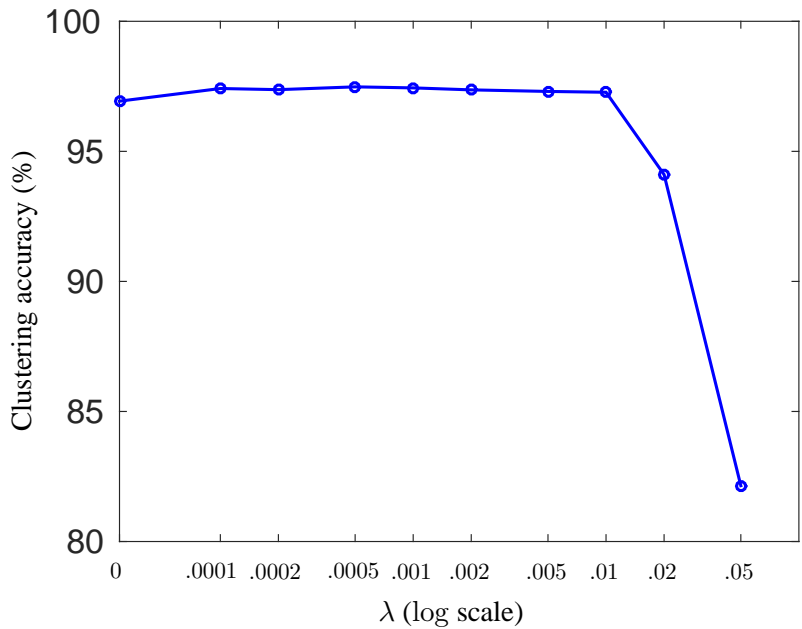}
\caption{Clustering accuracies (\%) of \DCCAE\ for different $\lambda$ values at $L=10$, with other hyperparameters set to their optimal values.  Each point gives the mean accuracy obtained with $5$ random seeds. $\lambda = 0.0005$ gives the best performance, but only by a very slight margin.
\jeff{any chance to reduce the font of the values of $\lambda$ to make the numbers not bump into each other?} \weiran{done}
}
\label{f:mnist_vary_lambda}
\end{figure}

For DNN-based models, we use SGD for optimization with minibatch size, learning rate and momentum tuned on the tuning set (more on this in Section~\ref{sec:expt_opt}). 
%\jeff{I added a latex forward reference saying where precisely these parameters are exactly listed. Given the importance of minibatch training with correlation based objectives, its important in this paper to remind the reader that the recipies discovered here in this paper are reported to the reader.}
A small weight decay parameter of $10^{-4}$ is used for all layers. We monitor the objective on the tuning set for early stopping. For each algorithm, we select the model with the best AC on the tuning set, and report its results on the test set. The AC and NMI results (in percent) for each algorithm are given in Table~\ref{t:mnist_acc_nmi}. As a baseline, we also cluster the original $784$-dimensional view 1 images.

%%\kl{added SVM stuff here}

Next, we also measure the quality of the projections via classification experiments.  If the learned features are clustered well into classes, then one might expect that a simple linear classifier can achieve high accuracy on these projections. We train one-versus-one linear SVMs \citep{ChangLin11a} on the projected training set (now using the ground truth labels), and test on the projected test set, while using the projected tuning set for selecting the SVM hyperparameter (the penalty parameter for hinge loss). Test error rates on the optimal embedding of each algorithm (with highest AC) are provided in Table~\ref{t:mnist_acc_nmi} (last column). These error rates agree with the clustering results. 
Multi-view feature learning makes classification much easier on this task: Instead of using a heavily nonlinear classifier on the original inputs, a very simple linear classifier that can be trained efficiently on low-dimensional projections already achieves high accuracy.

All of the multi-view feature learning algorithms achieve some improvement over the baseline.  The nonlinear CCA algorithms all perform similarly, and significantly better than \SVAE, \CorrAE, and \DistAE. We also qualitatively investigate the features by embedding the projected features in 2D using $t$-SNE \citep{MaatenHinton08a}; the resulting visualizations are given in Figures~\ref{f:mnist_all_1} and~\ref{f:mnist_all_2}. Overall, the visual class separation qualitatively agrees with the relative clustering and classification performance in Table~\ref{t:mnist_acc_nmi}.
%\kl{maybe the visualizations should be described first, then the table} \weiran{I am not sure what you had in mind, so I leave it up to you.}

In the embedding of input images (Figure~\ref{f:mnist_all_1} (a)), samples of each digit form an approximately one-dimensional, stripe-shaped manifold, and the degree of freedom along each manifold corresponds roughly to the variation in rotation angle (see Figure~\ref{f:mnist_all_1} (a')).
% (KL)  for embedding with images\SideNoteRA{incomplete?}). 
This degree of freedom does not change the identity of the image, which is common to both views. Projections by \SVAE/\CorrAE/\DistAE\ do achieve somewhat better separation for some classes, but the unwanted rotation variation is still prominent in the embeddings. On the other hand, without using any label information and with only paired noisy images, the nonlinear CCA algorithms manage to map digits of the same identity to similar locations while suppressing the rotational variation and separating images of different identities. 
% (KL)
Linear CCA also approximates the same behavior, but fails to separate the classes, presumably because the input variations are too complex to be captured by only linear mappings. Overall, \DCCAE\ gives the cleanest embedding, with different digits pushed far apart and good separation achieved.

The different behavior of CCA-based methods from \SVAE/\CorrAE/\DistAE\ suggests two things. First, when the inputs are noisy, reconstructing the input faithfully may lead to unwanted degrees of freedom in the features (\DCCAE\ tends to select a relatively small trade-off parameter $\lambda=10^{-3}$ or $10^{-2}$), further supporting that it is not necessary to fully minimize reconstruction error. We show the clustering accuracy of \DCCAE\ at $L=10$ for different $\lambda$ values in Figure~\ref{f:mnist_vary_lambda}. %%\SideNoteRA{Not necessarily, Note that the correlation objective can never exceed $1$ but the reconstruction error would scale with the input variance, which may be of the order $10^3$ given 784 pixels, each in the interval [0,1] with additive uniform noise.}). \kl{say more about behavior as a function of $\lambda$? can we include a plot vs. $\lambda$?  or vs.~$\lambda$ normalized by the variance of the objective terms} \weiran{see Table~\ref{t:varylambda}} 
Second, the hard CCA constraints, which enforce uncorrelatedness between different feature dimensions, appear essential to the success of CCA-based methods; these constraints are the difference between \DCCAE\ and \CorrAE.
% (KL) /\DistAE. 
However, the constraints without the multi-view objective seem to be insufficient.  To see this, we also visualize a $10$-dimensional locally linear embedding (LLE, \citealp{RoweisSaul00a}) of the test images in Fig.~\ref{f:mnist_all_1} (b). LLE satisfies the same uncorrelatedness constraints as in CCA-based methods, but without access to the second view, it does not separate the classes as nicely.%%\SideNoteRA{I don't think LLE constraints are anything like CCA. The constraints in LLE require reconstruction weights for each training sample to sum to one. \weiran{You are talking about the weights in building the affinity matrix in LLE. We are talking about its constraints for the final embedding.}}

\subsection{Acoustic-articulatory data for speech recognition}
\label{sec:xrmb}

We next experiment with the Wisconsin X-Ray Microbeam (XRMB) corpus \citep{Westbur94a} of simultaneously recorded speech and articulatory measurements from 47 American English speakers.
Multi-view feature learning via CCA/KCCA/DCCA has previously been shown to improve phonetic recognition performance when tested on audio alone
\citep{AroraLivesc13a,Wang_15a}. 

We follow the setup of \citet{AroraLivesc13a} and use the learned features for speaker-independent phonetic recognition. 
Similarly to \citet{AroraLivesc13a}, the inputs to multi-view feature learning are acoustic features (39D features consisting of mel frequency cepstral coefficients (MFCCs) and their first and second derivatives) and articulatory features (horizontal/vertical displacement of 8 pellets attached to different parts of the vocal tract) concatenated over a 7-frame window around each frame, giving 273D acoustic inputs and 112D articulatory inputs for each view. 

We split the XRMB speakers into disjoint sets of 35/8/2/2 speakers for feature learning/recognizer training/tuning/testing.  The 35 speakers for feature learning are fixed; the remaining 12 are used in a 6-fold experiment (recognizer training on 
% (KL) 
8 speakers, tuning on 2 speakers, and testing on the remaining 2 speakers).
%~\SideNoteRA{Not sure if this is the standard usage for fold; I thought each set of training/dev/test would be referred to as a fold.}  
Each speaker has roughly $50K$ frames, giving 1.43M multi-view training frames; this is a much larger training set than those used in previous work on this data set. We remove the per-speaker mean and variance of the articulatory measurements for each training speaker. All of the learned feature types are used in a tandem approach \citep{Herman_00c}, i.e., they are appended to the original 39D features and used in a standard hidden Markov model (HMM)-based recognizer with Gaussian mixture model observation distributions.  
The baseline is the recognition performance using the original MFCC features.
The recognizer has one 3-state left-to-right HMM per phone, using the same language model as in \citet{AroraLivesc13a}.

For each fold, we select the best hyperparameters based on recognition accuracy on the tuning speakers, and use the corresponding learned model for the test speakers. As before, models based on neural networks are trained via SGD with the optimization parameters tuned by grid search.  Here we do not use pre-training for weight initialization. A small weight decay parameter of $5\times 10^{-4}$ is used for all layers. For each algorithm, the feature dimensionality $L$ is tuned over $\{30, 50, 70\}$. For DNN-based models, we use hidden layers of $1500$ ReLUs. For \DCCA, we vary the network depths (up to 3 nonlinear hidden layers) of each view. In the best \DCCA\ architecture, $\f$ has $3$ ReLU layers of $1500$ units followed by a linear output layer while $\g$ has only a linear output layer.%%\kl{was this tuned on each fold, or kept constant?} \weiran{tuned on each fold/tuned on all folds averaged result inthe same type of architecture.}

For \CorrAE/\DistAE/\DCCAE, we use the same architecture of \DCCA\ for the encoders, and we set the decoders to have symmetric architectures to the encoders.  For this domain, we find that the best choice of architecture for the encoders/decoders for View 2 is linear while for View 1 it is typically three layers deep.  For \SVAE, the encoder $\f$ is similarly deep and the View 1 decoder $\p$ has the symmetric architecture, while its View 2 decoder $\q$ was set to linear to match the best choice for the other methods. % We fix $r_x=r_y=10^{-4}$ for \DCCAE\ (and \RKCCA/\NKCCA). 
We fix $(r_x,r_y)$ to small values as before. The trade-off parameter $\lambda$ is tuned for each algorithm by grid search. 

For \RKCCA, we find it important to use a large number of random features $M$ to get a competitive result, consistent with the findings of \citet{Huang_14a} when using random Fourier features for speech data. We tune kernel widths at $M=5,\!000$ with \RKCCA, and test \RKCCA\ with $M=30,\!000$ (the largest $M$ we could afford to obtain an exact SVD solution on a workstation with 32G main memory); %\SideNoteRA{Would be nice to give best bandwidths for all methods} \weiran{I removed saved data for these experiments.}
We are not able to obtain results for \NKCCA\ with $M=30,\!000$ in 48 hours with our implementation, so we report its test performance at $M=20,\!000$ with the optimal \RKCCA\ hyperparameters. Notice that \RKCCA\ has about $14.6$
%\SideNoteRA{I get $30000 \times 273 = 8.19M$ \weiran{You did not count the projection matrix $30000\times 50$, and view 2 parameters $30000\times (112+50)$.}} 
million parameters (random Gaussian samples + projection matrices from random Fourier features to the $L$-dimensional KCCA features,
%\SideNoteRA{What projection matrices? \weiran{The projection from random features to the final L-dimensional KCCA features}}), 
which is more than the number of weight parameters in the largest DCCA model, so it is slower than DCCA for testing (the cost of computing test features is linear in the number of parameters for both KCCA and DNNs).

\begin{table}[t]
\centering
\caption{Mean and standard deviations of PERs over 6 folds obtained by each algorithm on the XRMB test speakers.}
\label{t:xrmb_pers}
\begin{tabular}{l|c}
\hline
Method & Mean (std) PER (\%)  \\
\hline \hline
Baseline & 34.8\ \ (4.5) \\
CCA & 26.7\ \ (5.0) \\
\SVAE\ & 29.0\ \ (4.7) \\
\CorrAE\ & 30.6\ \ (4.8) \\
\DistAE-1\ & 33.2\ \ (4.7) \\
\DistAE-2\ & 32.7\ \ (4.9) \\
\RKCCA & 26.0\ \ (4.4) \\
\NKCCA & 26.6\ \ (4.2) \\
\DCCA\ & \textbf{24.8\ \ (4.4)}\\
\DCCAE\ & \textbf{24.5\ \ (3.9)} \\ 
\hline
\end{tabular}
\end{table}

Phone error rates (PERs) obtained by different feature learning algorithms are given in Table~\ref{t:xrmb_pers}. We see the same pattern as on MNIST:  Nonlinear CCA-based algorithms outperform \SVAE/\CorrAE/\DistAE.  Since the recognizer now is a nonlinear mapping (HMM), the performance of the linear CCA features is highly competitive. Again, \DCCAE\ tends to select a relatively small $\lambda$, indicating that the canonical correlation term is more important. %\SideNoteRA{Same comment regarding scaling as earlier applies.}

\subsection{Multilingual data for word embeddings}
\label{sec:NLP}

\begin{table}[t]
\centering
\caption{Spearman's correlation ($\rho$) for word/bigram similarities.}
\label{t:bigram}
\begin{tabular}{l|cccccc|cc}
\hline
Method &  WS-353 & WS-SIM & WS-REL & RG-65 & MC-30 & MTurk-287 & AN &  VN  \\
\hline \hline
Baseline & 67.0 & 73.0 & 63.1 & 73.4 & 78.5 & 52.0 & 45.0 &  39.1 \\
CCA &    68.4 & 71.9 & 64.7 & 76.6 & 83.4 & 58.7 &  46.6 & 43.2  \\
\SVAE &   63.5 & 68.0 & 58.0 & 75.9 & \textbf{85.9} & 62.9 & 46.6 & \textbf{44.6}  \\
\CorrAE & 63.5 & 69.0 & 57.5 & 73.7& 81.5 & 61.6 & 43.4 & 42.0 \\
\DistAE-1  & 64.5 & 70.7 & 58.2 & 75.7 & 84.2 & 61.6 & 45.3 & 39.4 \\
\DistAE-2  & 64.9 & 70.4 & 61.9 & 75.3 & 82.5& 62.5 & 42.4 & 40.0 \\
\RKCCA & 68.7 & 70.5 & 62.5 & 70.3 & 80.8 & 60.8 & 46.4 &  42.9 \\
\NKCCA & \textbf{69.9} & 74.9 & \textbf{65.7} & \textbf{80.7} & 85.8 & 57.5 & 48.3 & 40.9 \\
\DCCA &  69.7 & \textbf{75.4} & 63.4 & 79.4 & 84.3 & \textbf{65.0} & 48.5 &  42.5  \\
% \DCCAE & 69.7 & 75.0 & 64.4 & 79.1 & 84.7 & 59.9 & \textbf{49.1} &  43.2  \\
\DCCAE & 69.7 & 75.4 & 64.4 & 79.4 & 84.7 & 65.0 & \textbf{49.1} &  43.2  \\
\hline
\end{tabular}
\end{table}

In this task, we learn a vectorial representation of English words from pairs of English-German word embeddings for improved semantic similarity. We follow the setup of \citet{FaruquiDyer14a} and use as inputs $640$-dimensional monolingual word vectors trained via latent semantic analysis on the WMT 2011 monolingual news corpora 
and use the same $36K$ English-German word pairs for multi-view learning. The learned mappings are applied to the original English word embeddings ($180K$ words) and the projections are used for evaluation. 

% We evaluate on the bigram similarity dataset of \citet{MitchelLapata10a}, using the adjective-noun (AN) and verb-object (VN) subsets, and tuning and test splits (of size 649/1972) for each subset (we exclude the noun-noun subset as we find that the NN human annotations often reflect ``topical'' rather than ``functional'' similarity). We simply add the projections of the two words in each bigram to obtain an $L$-dimensional representation of the bigram, as done in prior work~\citep{BlacoeLapata12a}.
We evaluate learned features on two groups of tasks. The first group consists of the four word similarity tasks from \citet{FaruquiDyer14a}: WS-353 and the two splits WS-SIM and WS-REL, RG-65, MC-30, and MTurk-287. The second group of tasks uses the adjective-noun (AN) and verb-object (VN) subsets from the bigram similarity dataset of \citet{MitchelLapata10a}, and tuning and test splits (of size 649/1972) for each subset (we exclude the noun-noun subset as we find that the NN human annotations often reflect ``topical'' rather than ``functional'' similarity). We simply add the projections of the two words in each bigram to obtain an $L$-dimensional representation of the bigram, as done in prior work~\citep{BlacoeLapata12a}.
We compute the cosine similarity between the two vectors of each bigram pair, order the pairs by similarity, and report the Spearman's correlation ($\rho$) between the model's ranking and human rankings.

% We fix the feature dimensionality at $L=384$; other hyperparameters are tuned as in previous experiments. DNN-based models use ReLU hidden layers of width $1\,280$. A small weight decay parameter of $10^{-4}$ is used for all layers. We use two ReLU hidden layers for  encoders ($\f$ and $\g$), and try both linear and nonlinear networks with two hidden layers for decoders ($\p$ and $\q$). \RKCCA/\NKCCA\ are tested with $M=20,000$  using kernel widths tuned at $M=4,000$. We fix $r_x=r_y=10^{-4}$ for nonlinear CCAs. 

We tune the feature dimensionality $L$ over $\{128,384\}$; other hyperparameters are tuned as in previous experiments. DNN-based models use ReLU hidden layers of width $1,\!280$. A small weight decay parameter of $10^{-4}$ is used for all layers. We use two ReLU hidden layers for  encoders ($\f$ and $\g$), and try both linear and nonlinear networks with two hidden layers for decoders ($\p$ and $\q$). 
%\kl{was this model structure tuned somehow?}\weiran{no. I did not have the time to tune it. The size of this dataset is smaller though.}. 
\RKCCA/\NKCCA\ are tested with $M=20,\!000$ using kernel widths tuned at $M=4,\!000$. We fix $r_x=r_y=10^{-4}$ for nonlinear CCAs and tune them over $\{10^{-6},10^{-4},10^{-2},1,10^{2}\}$ for CCA.

For word similarity tasks, we simply report the highest  Spearman's correlation obtained by each algorithm.  For bigram similarity tasks,  we select for each algorithm the model with the highest Spearman's correlation on the 649 tuning bigram pairs, and we report its performance on the 1972 test pairs. The results are given in Table~\ref{t:bigram}. Unlike MNIST and XRMB, it is important for the features to reconstruct the input monolingual word embeddings well, as can be seen from the superior performance of \SVAE\ over \RKCCA/\NKCCA/\DCCA. This implies there is useful information in the original inputs that is not correlated across views. However, \DCCAE\ still performs the best on the AN task, in this case using a relatively large $\lambda=0.1$.
\jeff{Say why can't DCCAE ultimately do better if $\lambda$ is further increased?}
\weiran{My parameters are chosen with cross-validation. Unfortunately, the NLP datasets we used here are somewhat small. Sometimes there is problem of overfitting on validation set. I remember this is the case for some task.}

% \begin{table}[t]
% \centering
% \caption{Spearman's correlation ($\rho$) for bigram similarities.}
% \label{t:bigram}
% \begin{tabular}{l|cc|c}
% \hline
% Method &  AN &  VN & Avg. \\
% \hline \hline
% Baseline & 45.0 &  39.1 & 42.1 \\
% CCA & 46.6 &  37.7 & 42.2 \\
% \SVAE & 47.0 &  \textbf{45.0} & 46.0 \\
% \CorrAE  & 43.0 & 42.0  & 42.5\\
% \DistAE  & 43.6 &  39.4 & 41.5\\
% \RKCCA & 46.4 &  42.9 & 44.7\\
% \NKCCA & 44.3 & 39.5 & 41.9 \\
% \DCCA & 48.5 &  42.5 & 45.5 \\
% \DCCAE & \textbf{49.1} &  43.2 & \textbf{46.2} \\
% \hline
% \end{tabular}
% \end{table}

\subsection{Empirical analysis of \DCCA\ optimization}
\label{sec:expt_opt} 

\begin{figure}[t]
\centering
\psfrag{correlation}[c][c]{\large Total Canon. Corr.}
\psfrag{hours}{\large hours}
\psfrag{SGD 100}{\large STO 100}
\psfrag{SGD 200}{\large STO 200}
\psfrag{SGD 300}{\large STO 300}
\psfrag{SGD 400}{\large STO 400}
\psfrag{SGD 500}{\large STO 500}
\psfrag{SGD 750}{\large STO 750}
\psfrag{SGD 1000}{\large STO 1000}
\psfrag{NOI 100}{\large NOI 100}
\psfrag{L-BFGS}{\large L-BFGS}
\includegraphics[width=0.7\linewidth]{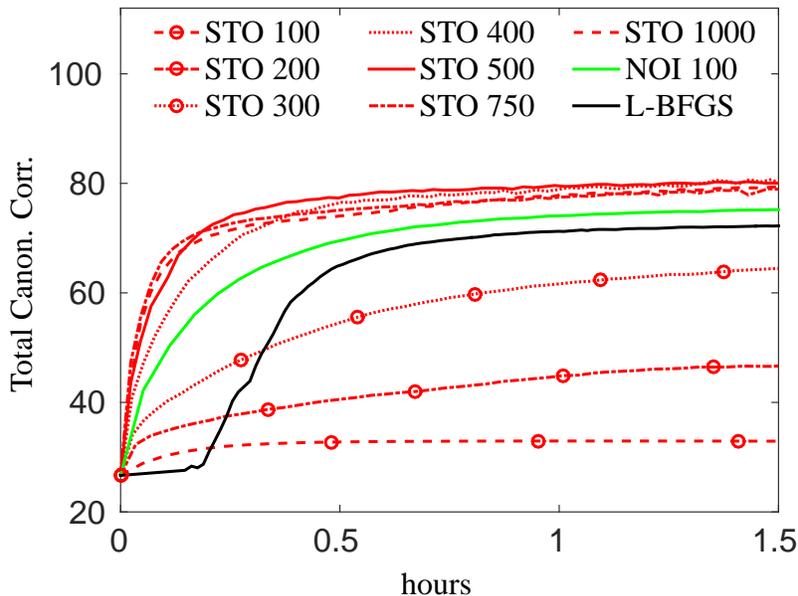}
\caption{Learning curves (total canonical correlation vs.~training time) of \DCCA\ on the XRMB `JW11' tuning set.
%~\raman{Why tuning? Why not the test set?}\weiran{In principle, I am not supposed to look at the test set performance during training.} 
The maximum correlation is equal to the dimensionality (112).
%~\raman{Does this indicate overfitting? Are we learning spurious correlations? Do these models give decent PER as well?}\weiran{We are discussing purely the optimization issue here.} 
Each marker corresponds to one epoch (one pass over the training data) for STO or NOI, %stochastic optimization, 
or one iteration for L-BFGS.  ``STO $n$'' = stochastic optimization with minibatch size $n$.
}
% \kl{why does STO 1000 do so badly?}\weiran{That curve did not do very badly. I picked the hyperparameter combination that achieves best tuning corr, regardless of whether the curve goes down after reaching the optimal value, because we could do early stopping. I now use another curve that performs similarly but does not go down.}} %%\raman{STO does not appear in the legend.} \kl{the numbers here are better than in the Allerton paper -- why?  more tuning?  Also, what is the time per epoch?}} %\klcomment{Make this 4 figsec:  (DCCA, DCCA\_AE) x (XRMB, MNIST)}}
\label{f:learningcurve}
\end{figure}

We now explore the issue of stochastic optimization for \DCCA\ as discussed in Section~\ref{sec:DCCA}. We use the same XRMB dataset as in the acoustic-articulatory experiment of Section~\ref{sec:xrmb}. In this experiment, we select utterances from a single speaker `JW11' and divide them into training/tuning/test splits of roughly $30K$/$11K$/$9K$ pairs of acoustic and articulatory frames.\footnote{Our split of the data is the same as the one used by \citet{Andrew_13a}. We note that \citet{Lopez_14b} used the same speaker in their experiments, but they randomly shuffled all $50K$ frames before creating the splits. We suspect that \DCCA\ (as well as their own algorithm) were under-tuned in their experiments. We ran experiments on a randomly shuffled dataset with careful tuning of kernel widths for \RKCCA/\NKCCA~(using rank $M=6000$) and obtained canonical correlations of 99.2/105.6/107.6 for \RKCCA/\NKCCA/\DCCA, which are better than those reported in \citet{Lopez_14b}.}
%\kl{do we have our own results on random splits, to substantiate the suspicion?} \weiran{added} 
Since the sole purpose of this experiment is to study the optimization of nonlinear CCA algorithms rather than the usefulness of the features, we do not carry out any down-stream tasks or careful model selection for that purpose (\eg, a search for feature dimensionality $L$).

We now consider the effect of our stochastic optimization procedure for \DCCA, denoted STO below, and demonstrate the importance of minibatch size. We use a 3-layer architecture where the acoustic and articulatory networks have two hidden layers of 1800 and 1200 rectified linear units (ReLUs) respectively, and the output dimensionality (and therefore the maximum possible total canonical correlation over dimensions) is $L=112$. We use a small weight decay $\gamma=10^{-4}$, and do grid search for several hyperparameters: $r_x, r_y \in \{10^{-4},\ 10^{-2},\ 1,\ 10^2\}$, constant learning rate in $\{10^{-4},\ 10^{-3},\ 10^{-2},\ 10^{-1}\}$, fixed momentum in $\{0,\ 0.5,\ 0.9,\ 0.95,\ 0.99\}$, and minibatch size in $\{100,\ 200,\ 300,\ 400,\ 500,\ 750,\ 1000\}$. After learning the projection mappings on the training set, we apply them to the tuning/test set to obtain projections, and %run another linear CCA step on top of the projections to measure the canonical correlation between views.%%~\SideNoteRA{Confusing as well as redundant to say ``run another linear CCA step on top of$\ldots$'', because computing canonical correlation does that implicitly.} \weiran{fixed.}
measure the canonical correlation between views. 
Figure~\ref{f:learningcurve} shows the learning curves on the tuning set for different minibatch sizes, each using the optimal values for the other hyperparameters. It is clear that for small minibatches ($100$, $200$), the objective quickly plateaus at a low value, whereas for large enough minibatch size, there is always a steep increase at the beginning, which is a known advantage of stochastic first-order algorithms \citep{BottouBousquet08a}, and a wide range of learning rate/momentum give very similar results. The reason for such behavior is that the stochastic estimate of the \DCCA\ objective becomes more accurate as minibatch size $n$ increases. We provide theoretical analysis of the error between the true objective and its stochastic estimate in Appendix~\ref{sec:append-sgd}. 

Recently, \citet{Wang_15c} have proposed a nonlinear orthogonal iterations (NOI) algorithm for the \DCCA\ objective which extends the alternating least squares procedure~\citep{GolubZha95a,LuFoster14a} and its stochastic version~\citep{Ma_15b} for CCA. Each iteration of the NOI algorithm adaptively estimates the covariance matrices of the projections of each view, whitens the projections of a minibatch using the estimated covariance matrices, and takes a gradient step over DNN weight parameters of the nonlinear least squares problems of regressing each view's input against the whitened projection of the other view for the minibatch. The advantage of NOI is that it performs well with smaller minibatch sizes and thus reduces memory consumption. \citet{Wang_15c} have shown that NOI can achieve the same objective value as STO using smaller minibatches. For the problems considered in this paper, however, each epoch of NOI takes a longer time than that of STO, due to the whitening operations at each iteration (involving eigenvalue decomposition of $L\times L$ covariance matrices). The smaller the minibatch size we use in NOI, the more frequently we run such operations. We report the learning curve of NOI with minibatch size $n=100$ while all other hyper-parameters are tuned similarly to STO.  NOI can eventually reach the same objective as STO given more training time.  Ultimately, the choice of optimization algorithm will depend on the task-specific data and time/memory constraints. %%\kl{moved/added last 2 sentences}

Finally, we also train the same model using batch training with L-BFGS\footnote{We use the L-BFGS implementation of \citet{Schmid12a}, which includes a good line-search procedure.} with the same random initial weight parameters and tune $(r_x,r_y)$ on the same grid. While L-BFGS does well on the training set, its performance on tuning/test is usually worse than that of stochastic optimization with reasonable hyperparameters. \jeff{Why is this? Also, is this the fully batch case? Provide more details of this instance here, enough so that it is likely the experiment could be repeated by others.} \weiran{Stochastic algorithms tend to generalize better. I had code online that more or less replicate these experiments.} \kl{added ``batch training with L-BFGS'' to clarify}

\begin{figure}[t]
\centering
\psfrag{corr}[cb][c]{\large Total Canon. Corr.}
\psfrag{M}{\large $M$}
\psfrag{NKCCA}{\NKCCA}
\psfrag{RKCCA}{\RKCCA}
\includegraphics[width=0.7\linewidth]{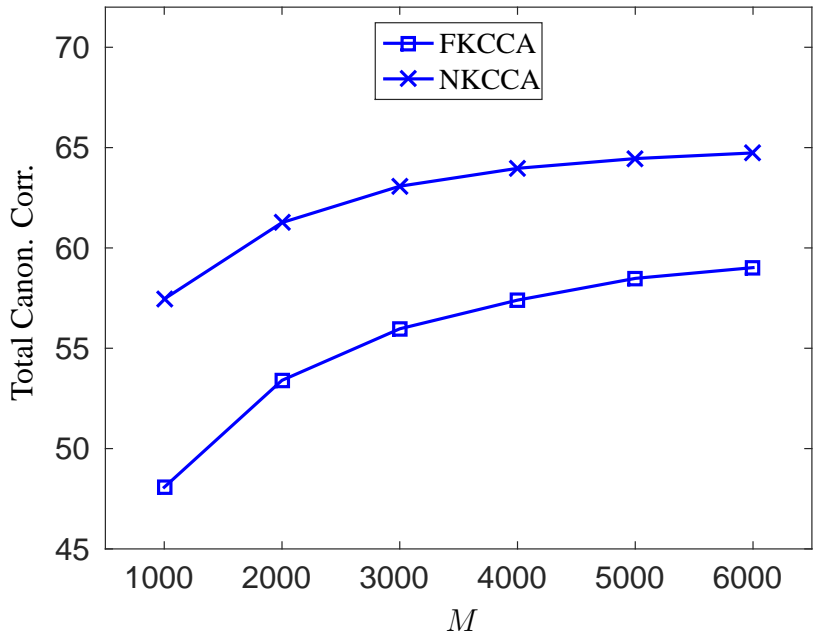}
\caption{Total canonical correlation achieved by approximate KCCAs on the XRMB `JW11' tuning set for different rank $M$.%%\raman{What is RKCCA? I guess you meant FKCCA. Lopez-Paz et al. report much higher correlation for FKCCA and NKCCA.}\weiran{The legends are replaced by psfrag.}
}
\label{f:kcca_M}
\end{figure}

\begin{figure}[t]
\centering
\psfrag{corr}[cb][c]{\large Total Canon. Corr.}
\psfrag{time}[][]{\large hours}
\psfrag{NKCCA}{\NKCCA}
\psfrag{RKCCA}{\RKCCA}
\psfrag{DCCA LBFGS}{\DCCA\ L-BFGS}
\psfrag{DCCA STO}{\DCCA\ STO}
\includegraphics[width=0.7\linewidth]{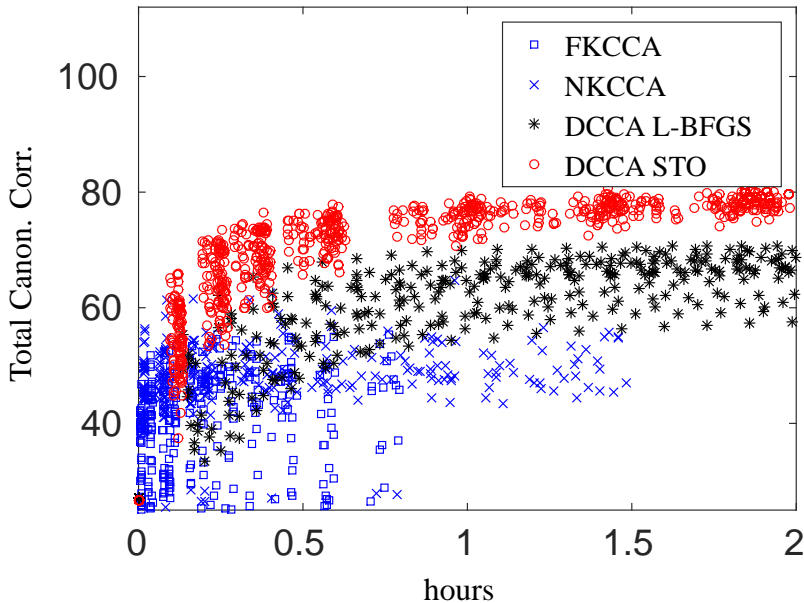}
\caption{Total canonical correlation vs.~running time (hours) achieved by approximate KCCAs (varying rank $M$ and kernel widths), by \DCCA\ with L-BFGS (varying $(r_x,r_y)$), and by \DCCA\ with stochastic optimization (varying $(r_x,r_y)$, minibatch size, learning rate, and momentum, run for up to $100$ epochs) %%\kl{and number of epochs?}
on the XRMB `JW11' tuning set. To avoid clutter, we show \RKCCA/\NKCCA\ results for the best $300$ hyperparameter combinations, and \DCCA\ results for the best $100$ 
%\kl{looks more like 300 for DCCA STO} \weiran{performance of 100 combinations at different \#epochs/run time. see below} 
hyperparameter combinations. Since both L-BFGS and stochastic optimization are iterative algorithms, we also show the correlation obtained at different numbers of epochs. 
%\kl{use different colors instead of red/magenta, and also use different markers for colorblind-friendliness} \weiran{updated.}
}
\label{f:scatterJW11}
\end{figure}

We also train \RKCCA\ and \NKCCA\ on this dataset. We tune $(r_x,r_y)$ on the same grid as for \DCCA, tune the kernel width for each view, and vary the approximation rank $M$ from $1000$ to $6000$ for each method. We plot the best total canonical correlation achieved by each algorithm on the tuning set as a function of $M$ in Figure~\ref{f:kcca_M}. Clearly both algorithms require relatively large $M$ to perform well on this task, but the return in further increasing $M$ is diminishing. \NKCCA\ achieves better results than \RKCCA, although forming the low-rank approximation also becomes more costly as $M$ increases. % \klcomment{More analysis of memory/time needs of KCCA vs.~DCCA/DCCA\_AE}
We plot the total canonical correlation achieved by each algorithm at different running times with various hyperparameters in Figure~\ref{f:scatterJW11}. %\klcomment{say something about memory too?  state that exact KCCA is impractical here for that reason?} \wwcomment{we mentioned this at the begining of section 4.}
% To show the trade-off between run time and objective between approximate KCCAs and DCCA with different optimization strategies.

\begin{table}[t]
\centering
\caption{Total canonical correlation achieved by each algorithm on the XRMB `JW11' test set.
% (KL)
The maximum achievable value here is 112.} % \klcomment{Add DCCA\_AE, CorrAE to see that the results make sense (DCCA\_AE should be worse, CorrAE better)}}
\label{t:jw11_corr}
\begin{tabular}{|c||c|c|c|c|c|}\hline
 & CCA & \RKCCA\ & \NKCCA\ & \DCCA\ L-BFGS & \DCCA\ Stochastic \\ \hline
Canon. Corr. & 23.4 &  68.2 & 74.3 & 83.7 & 89.6 \\ \hline
\end{tabular}
\end{table}

Finally, we select for each algorithm the best model on the tuning set and give the corresponding total canonical correlation on the test set in Table~\ref{t:jw11_corr}.  % \klcomment{removed concluding sentences as I thought they weren't needed here.}

%Overall, \DCCA\ achieves much better canonical correlation than the approximate KCCA algorithms (and than linear CCA). We conclude that \DCCA\ is a powerful alternative to KCCA and can be trained using stochastic optimization given appropriate training parameters.

\section{Conclusion}
\label{sec:conclusion}

We have explored several approaches in the space of DNN-based multi-view representation learning.  We have found that on several tasks, CCA-based models outperform \autoencoder-based models (\SVAE) and models based on between-view squared distance (\DistAE) or correlation (\CorrAE) instead of canonical correlation.  The best overall performer is a new \DCCA\ extension, deep canonically correlated autoencoders (\DCCAE).  We have studied these objectives in the context of DNNs, but we expect that the same trends should apply also to other network architectures such as convolutional~\citep{Lecun_98a} and recurrent~\citep{elman1990finding,hochreiter1997long} networks, and this is one direction for future work.

In light of the empirical results, it is interesting to consider again the main features of each type of objective and corresponding constraints.  Autoencoder-based approaches are based on the idea that the learned features should be able to accurately reconstruct the inputs (in the case of multi-view learning, the inputs in both views).  The CCA objective, on the other hand, focuses on how well each view's representation predicts the other's, ignoring the ability to reconstruct each view.  CCA is expected to perform well for clustering and classification when the two views are uncorrelated given the class label~\citep{Chaudh_09a}. 
%\kl{say something about other tasks like regression and cite Kakade?}\weiran{That paper shows linear CCA provides bias-variance trade-off for regression. Not very relevant here in my opinion.} 
The noisy MNIST dataset used here simulates exactly this scenario, and indeed this is the task where deep CCA outperforms other objectives by the largest margins. 
\jeff{I don't understand this. CCA (the linear case) does not outperform the other
objectives. Please clarify this statement.}
\kl{changed ``CCA'' to ``deep CCA'' since we don't mean the linear case, but rather are referring to the CCA objective (on top of a deep network) vs. other objectives}
Even in the other tasks, however, there usually seems to be only a small advantage to being able to reconstruct the inputs faithfully.

The constraints in the various methods also have an important effect.  The performance difference between \DCCA\ and \CorrAE\ demonstrates that uncorrelatedness between learned dimensions is important.  On the other hand, the stronger \DCCA\ constraint may still not be sufficiently strong; an even better constraint may be to require the learned dimensions to be independent (or approximately so), and this is an interesting avenue for future work.

We believe the applicability of the \DCCA\ objective goes beyond the unsupervised feature learning setting. For example, it can be used as a data-dependent regularizer in supervised or semi-supervised learning settings where we have some labeled data as well as multi-view observations.  The usefulness of CCA in such settings has previously been analyzed theoretically~\citep{KakadeFoster07a,Mcwill_13a} and has begun to be explored experimentally~\citep{AroraLivesc14a}.

% \clearpage
\appendix

\section{Analysis of stochastic optimization for \DCCA}
\label{sec:append-sgd}

Our SGD-like optimization for DCCA works as follows. We randomly pick a minibatch of $n$ training pairs $\{(\x_i,\y_i)\}_{i=1}^n$, 
% (KL) feed-forward them 
feed them forward through the networks $\f$ and $\g$ to obtain outputs $\{(\f_i,\g_i)\}_{i=1}^n$ where $\f_i=\f(\x_i)\in \bbR^{d_x}$ and $\g_i=\g(\y_i)\in \bbR^{d_y}$, and estimate the covariances of network outputs based on these samples, namely $\hat{\bSigma}_{xx}=\frac{1}{n}\sum_{i=1}^n \f_i \f_i^\top + r_x \I$, $\hat{\bSigma}_{yy}=\frac{1}{n}\sum_{i=1}^n \g_i \g_i^\top + r_y \I$, and $\hat{\bSigma}_{xy}=\frac{1}{n}\sum_{i=1}^n \f_i \g_i^\top$, and finally compute a ``partial'' objective which is the sum of the top $L$ singular values of $\hat{\T} = \hat{\bSigma}_{xx}^{-1/2} \hat{\bSigma}_{xy} \hat{\bSigma}_{yy}^{-1/2}$, together with its gradient with respect to the network outputs and the weights at each layer of each network (see Section~3 of \citealp{Andrew_13a} for the gradient formulas). 

We denote by $\bTheta^{(n)}$ the ``partial'' objective based on the $n$ samples, i.e.
\begin{align*}
\bTheta^{(n)}=\sum_{k=1}^L \sigma_k (\hat{\T}) = \sum_{k=1}^L \sigma_k (\hat{\bSigma}_{xx}^{-1/2} \hat{\bSigma}_{xy} \hat{\bSigma}_{yy}^{-1/2}).
\end{align*}
And recall that our true objective function is 
\begin{align*}
\bTheta=\sum_{k=1}^L \sigma_k (\T) = \sum_{k=1}^L \sigma_k ({\bSigma}_{xx}^{-1/2}  {\bSigma}_{xy} {\bSigma}_{yy}^{-1/2}),
\end{align*}
where ${\bSigma}_{xx}=\frac{1}{n}\sum_{i=1}^N \f_i \f_i^\top + r_x \I$, ${\bSigma}_{yy}=\frac{1}{n}\sum_{i=1}^N \g_i \g_i^\top + r_y \I$, and ${\bSigma}_{xy}=\frac{1}{n}\sum_{i=1}^N \f_i \g_i^\top$ are computed over the entire training set.

An important observation is that 
\begin{align*}
\bbE\,\bTheta^{(n)} \neq \bTheta,
\end{align*}
where the expectation is taken over random selection of the $n$ training samples (all expectations in this section are taken over the random sampling process). This fact indicates that in expectation, the ``partial'' objective we compute is not equal to the true objective we want to optimize and thus the naive implementation is not really a stochastic gradient algorithm (which requires the gradient estimate to be unbiased). The reason why we can not in general have $\bbE\,\bTheta^{(n)}=\bTheta$ for the naive implementation is that there exist three nonlinear operations in our objective:
% (KL) \footnote{We are grateful to Nati Srebro for helpful discussion with this issue.}: 
the sum of singular values operations, the multiplication of three matrices, and the inverse square root operations for auto-covariance matrices. These operations keep us from moving the expectation inside, even though we do have $\bbE\,\hat{\bSigma}_{xx}={\bSigma}_{xx}$, $\bbE\,\hat{\bSigma}_{yy}={\bSigma}_{yy}$, and $\bbE\,\hat{\bSigma}_{xy} ={\bSigma}_{xy}$.

In the following, we try to bound the error between $\bbE\,\bTheta^{(n)}$ and $\bTheta$ for a slightly modified version of the naive implementation. Specifically, we make the two assumptions below which make the analysis easier.
\begin{itemize}
\item \textbf{A1}: The samples used for estimating ${\bSigma}_{xx}$, ${\bSigma}_{yy}$ and ${\bSigma}_{xy}$ are chosen independently from each other, each by sampling examples (for ${\bSigma}_{xx}$ and ${\bSigma}_{yy}$) or example pairs (for ${\bSigma}_{xy}$) uniformly at random with replacement from the training set. In practice, we could randomly pick $n$ samples of $\f(\x)$ for computing  $\hat{\bSigma}_{xx}$, another $n$ samples of $\g(\y)$ for computing $\hat{\bSigma}_{yy}$, and $n$ pairs of samples of $(\f(\x), \g(\y))$ for computing $\hat{\bSigma}_{xy}$, and the computational cost of this modified procedure is twice the cost of the original naive implementation. This simple modification allows the expectation of matrix multiplications to factorize. 
\item \textbf{A2}: Additionally, we assume an upper bound on the 
% (KL) size
magnitude of the neural network outputs, namely, $\max( \norm{\f_i}^2, \norm{\g_i}^2 ) \le B$, $\forall i$. This holds when nonlinear activations with a bounded range are used; \eg, with logistic sigmoid or hyperbolic tangent activations, the upper bound $B$ can be set to the output dimensionality (because the squared activation is bounded by $1$ for each output unit), or when the inputs themselves are bounded and the functions $\f(\x)$ and $\g(\y)$ are Lipschitz.
\jeff{Doesn't it also hold if the inputs vectors are themselves bounded, even for the RelU non-linearity, as long as the weights are also bounded?} \kl{yes, wording added}

\end{itemize}

Our analysis is based on the following formulation of the linear CCA solution~\citep{Borga01a}. Notice that the optimal $(\U, \V)$ for the CCA objective \eqref{e:dcca} satisfies
\begin{align} \label{e:cca-svd}
\left[\begin{array}{cc} \0 & {\T} \\ {\T}^\top & \0 \end{array}\right] \left[\begin{array}{c} {\bSigma}_{xx}^{1/2} \U \\ {\bSigma}_{yy}^{1/2}  \V \end{array}\right] = 
\left[\begin{array}{c}{\bSigma}_{xx}^{1/2} \U \\ {\bSigma}_{yy}^{1/2}  \V\end{array}\right]
{\bSigma},
\end{align}
where ${\bSigma}$ contains the top $L$ singular values of ${\T}$ on the diagonal. This is easy to verify as ${\bSigma}_{xx}^{1/2} \U$ and ${\bSigma}_{yy}^{1/2} \V$ contain the top left/right singular vectors of ${\T}$ (see, \eg, Section~2 of \citealp{Andrew_13a}).

Multiplying both sides of \eqref{e:cca-svd} by $\left[\begin{array}{cc} {\bSigma}_{xx}^{-1/2} & \0 \\ \0 & {\bSigma}_{yy}^{-1/2} \end{array}\right]$ gives
\begin{align*}
\left[\begin{array}{cc} {\bSigma}_{xx}^{-1} & \0 \\ \0 &  {\bSigma}_{yy}^{-1} \end{array}\right] \left[\begin{array}{cc} \0 & {\bSigma}_{xy} \\ {\bSigma}_{xy}^\top & \0 \end{array}\right] \left[\begin{array}{c} \U \\ \V \end{array}\right] =\left[\begin{array}{c} \U \\ \V \end{array}\right] {\bSigma},
\end{align*}
which implies $\left[\begin{array}{c} \U \\ \V \end{array}\right]$ correspond to the top $L$ eigenvectors of the matrix ${\A}^{-1} {\B}$, where
\begin{align*}
{\A}=\left[\begin{array}{cc} {\bSigma}_{xx} & \0 \\ \0 &  {\bSigma}_{yy} \end{array}\right],  \qquad
{\B}=\left[\begin{array}{cc} \0 & {\bSigma}_{xy} \\ {\bSigma}_{xy}^\top & \0 \end{array}\right].
\end{align*}
(It is straightforward to check that eigenvalues of ${\A}^{-1} {\B}$ are $\pm \sigma_1 ({\T}), \pm \sigma_2 ({\T}), \dots.$) %% \kl{mention that this is a different $\sigma$ from the activation function?} \weiran{We shall fix the appearance at the activation function.}

When using minibatches, the estimate of the objective is of course computed based on the randomly chosen subset of samples, and is the sum of the top eigenvalues of the matrix $\hat{\A}^{-1} \hat{\B}$, where 
\begin{align*}
\hat{\A}=\left[\begin{array}{cc} \hat{\bSigma}_{xx} & \0 \\ \0 &  \hat{\bSigma}_{yy} \end{array}\right],  \qquad
\hat{\B}=\left[\begin{array}{cc} \0 & \hat{\bSigma}_{xy} \\ \hat{\bSigma}_{xy}^\top & \0 \end{array}\right].
\end{align*}

We now try to bound the expected error between $\hat{\A}^{-1} \hat{\B}$ and ${\A}^{-1} {\B}$ measured in spectral norm. \weiran{The following is newly added.} An important tool in our analysis is the matrix Bernstein inequality below.
\begin{lem}[Matrix Bernstein, Theorem~1.6.2 of \citealp{Tropp12a}]\label{lem:bernstein} Let $\A_1,\dots,\A_n$ be independent random matrices with common dimension $d_1\times d_2$. Assume that each matrix has bounded deviation from its mean:
\begin{gather*}
\norm{ \A_k - \bbE\,\A_k } \le R \qquad \forall k=1,\dots,n.
\end{gather*}
Form the sum $\B=\sum_{k=1}^n \A_k$, and introduce a variance parameter
\begin{gather*}
\sigma^2 = \max \left\{ \norm{\bbE\,\left[(\B-\bbE\,\B) (\B-\bbE\,\B)^\top\right]}, \norm{\bbE\,\left[(\B-\bbE\,\B)^\top (\B-\bbE\,\B)\right]} \right\}.
\end{gather*}
Then 
\begin{gather*}
\bbE\,\norm{ \B-\bbE\,\B } \le \sqrt{2 \sigma^2 \log (d_1 + d_2)} + \frac{1}{3} R \log (d_1 + d_2).
\end{gather*}
\end{lem}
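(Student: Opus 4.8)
The plan is to derive this from the matrix Laplace-transform method after two standard reductions. First I would recenter: with $\X_k := \A_k - \bbE\,\A_k$ the summands are independent, mean zero, and satisfy $\norm{\X_k}\le R$, while $\B - \bbE\,\B = \sum_{k=1}^n \X_k$; by independence and zero mean, $\bbE\,[(\B-\bbE\,\B)(\B-\bbE\,\B)^\top] = \sum_k \bbE\,\X_k\X_k^\top$ and likewise with the factors transposed, so $\sigma^2 = \max\{\,\norm{\sum_k\bbE\,\X_k\X_k^\top},\ \norm{\sum_k\bbE\,\X_k^\top\X_k}\,\}$. Second I would pass to Hermitian dilations: for a $d_1\times d_2$ matrix $\X$ set $\calH(\X) = \begin{bmatrix}\0 & \X\\ \X^\top & \0\end{bmatrix}$, which is Hermitian of size $d := d_1+d_2$, has eigenvalues $\{\pm\sigma_i(\X)\}$ (together with zeros), so $\lambda_{\max}(\calH(\X)) = \norm{\X}$, and satisfies $\calH(\X)^2 = \diagop(\X\X^\top,\ \X^\top\X)$. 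Since $\calH$ is linear, $\calH(\B-\bbE\,\B) = \sum_k\calH(\X_k)$, the $\calH(\X_k)$ are independent Hermitian mean-zero with $\norm{\calH(\X_k)}\le R$, and $\norm{\sum_k\bbE\,\calH(\X_k)^2} = \sigma^2$. Thus it suffices to prove the Hermitian version: $\bbE\,\lambda_{\max}(\sum_k\X_k)\le\sqrt{2\sigma^2\log d} + \tfrac13 R\log d$ for independent Hermitian mean-zero $\X_k$ with $\norm{\X_k}\le R$ and $\norm{\sum_k\bbE\,\X_k^2}\le\sigma^2$.

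For the Hermitian case I would start from the master bound $\bbE\,\lambda_{\max}(\sum_k\X_k)\le\inf_{\theta>0}\ \tfrac1\theta\log\bbE\,\traceop\exp(\theta\sum_k\X_k)$, which follows from Jensen's inequality together with $\lambda_{\max}(\Y)\le\log\traceop e^{\Y}$. The crucial estimate is on the matrix moment generating function. Using Lieb's concavity theorem — equivalently, the subadditivity of matrix cumulant generating functions — and independence, one gets $\bbE\,\traceop\exp(\sum_k\theta\X_k)\le\traceop\exp(\sum_k\log\bbE\,e^{\theta\X_k})$. Then, since the scalar function $x\mapsto(e^x-x-1)/x^2$ is increasing and $\I+\M\preceq e^{\M}$ for $\M\succeq\0$, I would show that for $0<\theta<3/R$, $\bbE\,e^{\theta\X_k}\preceq\exp\!\big(g(\theta)\,\bbE\,\X_k^2\big)$ with $g(\theta) = \tfrac{\theta^2/2}{1 - R\theta/3}$ (the mean-zero assumption kills the first-order term). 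Operator monotonicity of $\log$ then gives $\sum_k\log\bbE\,e^{\theta\X_k}\preceq g(\theta)\sum_k\bbE\,\X_k^2$, so $\lambda_{\max}$ of this matrix is at most $g(\theta)\sigma^2$ and $\traceop\exp(\sum_k\log\bbE\,e^{\theta\X_k})\le d\,e^{g(\theta)\sigma^2}$. Substituting back, $\bbE\,\lambda_{\max}(\sum_k\X_k)\le\frac{\log d}{\theta} + \frac{\theta\sigma^2/2}{1-R\theta/3}$; optimizing this scalar bound over $\theta\in(0,3/R)$ yields $\sqrt{2\sigma^2\log d} + \tfrac13 R\log d$, and replacing $d$ by $d_1+d_2$ via the dilation finishes the proof.

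The main obstacle is the matrix-analytic input behind the trace-exponential subadditivity: Lieb's concavity theorem (that $\A\mapsto\traceop\exp(\HH + \log\A)$ is concave on positive-definite $\A$) is genuinely deep, and it is exactly what lets the expectation ``split over $k$'' while staying inside the trace-exponential. If one wished to avoid it, one could instead peel off summands one at a time using the Golden--Thompson inequality $\traceop e^{\A+\B}\le\traceop(e^{\A}e^{\B})$ — the classical Ahlswede--Winter route, which is elementary but loses a dimensional factor and so would not recover the clean constants $\sqrt2$ and $1/3$. Everything else (the recentering, the dilation trick, the scalar mgf bound, and the one-dimensional optimization over $\theta$) is routine once that ingredient is in hand; since the statement is quoted verbatim from Tropp, in the paper itself nothing beyond the citation is needed.
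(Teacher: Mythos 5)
Your proposal is correct: the lemma is imported verbatim from Tropp (Theorem~1.6.2 of the cited monograph) and the paper offers no proof beyond the citation, so, as you note, nothing more is formally required. Your sketch faithfully reproduces the cited source's own argument---recentering, the Hermitian dilation to reduce to the symmetric case with $d = d_1+d_2$, the matrix Laplace-transform master bound, Lieb's concavity for the subadditivity of the matrix cgf, the Bernstein-type mgf bound $\bbE\,e^{\theta\X_k}\preceq\exp\bigl(\tfrac{\theta^2/2}{1-R\theta/3}\,\bbE\,\X_k^2\bigr)$, and the optimization over $\theta$---so there is no gap and no divergence from the intended proof.
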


The following result and its proof are similar to that of \citet[Theorem.~4]{Lopez_14b}. \weiran{Although I believe their bound misses a term regarding $\norm{\bbE \E_i}$. The problem was that for $n\le N$, $\bbE\,\left[\hat{\bSigma}_{xx}^{-1}\right]  - {\bSigma}_{xx}^{-1} \neq \0$. }

%\weiran{The theorem statement is updated.}
\begin{thm} Assume that \textbf{\emph{A1}} and \textbf{\emph{A2}} hold for our stochastic estimate of the true DCCA objective, and assume that the eigenvalues of autocovariance matrices $\hat{\bSigma}_{xx}$ and $\hat{\bSigma}_{yy}$ are lower bounded by $\gamma_x>0$ and $\gamma_y>0$ respectively, i.e.,
\begin{align*}
\hat{\bSigma}_{xx} \succeq \gamma_x \I, \quad\qquad \hat{\bSigma}_{yy} \succeq \gamma_y \I.
\end{align*}
Then we have the following bound for the expected error:
\begin{gather} \textstyle
\bbE\,\norm{\hat{\A}^{-1} \hat{\B} - {\A}^{-1} {\B}} \le \max \left\{ e_1, e_2 \right\}  
\end{gather}
where the expectation is taken over random selection of training samples as described in \textbf{\emph{A1}}, and %%~\SideNoteRA{What is the expectation w.r.t?}
\begin{align*} \textstyle
e_1 & = \frac{B}{\gamma_x^2} \left( \sqrt{ \frac{2B^2 \log(2 d_x)}{n}} + \frac{2B\log(2 d_x)}{3n} \right)  + \frac{1}{\gamma_x} \left( \sqrt{ \frac{2B^2 \log(d_x+d_y)}{n}} + \frac{2B\log(d_x+d_y)}{3n} \right), \\
e_2 & = \frac{B}{\gamma_y^2} \left( \sqrt{ \frac{2B^2 \log(2 d_y)}{n}} + \frac{2B\log(2 d_y)}{3n} \right)  + \frac{1}{\gamma_y} \left( \sqrt{ \frac{2B^2 \log(d_x+d_y)}{n}} + \frac{2B\log(d_x+d_y)}{3n} \right).
\end{align*}
\end{thm}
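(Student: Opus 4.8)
The plan is to exploit the block structure of $\A,\hat{\A},\B,\hat{\B}$ to reduce the spectral-norm error to two matrix-perturbation estimates, and then control each with the matrix Bernstein inequality (Lemma~\ref{lem:bernstein}). First, since $\A,\hat{\A}$ are block-diagonal and $\B,\hat{\B}$ block-antidiagonal,
\[
\hat{\A}^{-1}\hat{\B}-{\A}^{-1}{\B}=\left[\begin{array}{cc}\0 & \D_x\\ \D_y & \0\end{array}\right],\qquad
\D_x:=\hat{\bSigma}_{xx}^{-1}\hat{\bSigma}_{xy}-{\bSigma}_{xx}^{-1}{\bSigma}_{xy},\quad
\D_y:=\hat{\bSigma}_{yy}^{-1}\hat{\bSigma}_{xy}^{\top}-{\bSigma}_{yy}^{-1}{\bSigma}_{xy}^{\top}.
\]
The squared singular values of a block-antidiagonal matrix are the union of those of its two blocks, so $\norm{\hat{\A}^{-1}\hat{\B}-{\A}^{-1}{\B}}=\max\{\norm{\D_x},\norm{\D_y}\}$, and it suffices to show $\bbE\norm{\D_x}\le e_1$ (the bound $\bbE\norm{\D_y}\le e_2$ then follows by the symmetric argument with $x$ and $y$ interchanged).

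Second, using the resolvent identity $\hat{\bSigma}_{xx}^{-1}-{\bSigma}_{xx}^{-1}=\hat{\bSigma}_{xx}^{-1}({\bSigma}_{xx}-\hat{\bSigma}_{xx}){\bSigma}_{xx}^{-1}$, I would write $\D_x=\hat{\bSigma}_{xx}^{-1}(\hat{\bSigma}_{xy}-{\bSigma}_{xy})+\hat{\bSigma}_{xx}^{-1}({\bSigma}_{xx}-\hat{\bSigma}_{xx}){\bSigma}_{xx}^{-1}{\bSigma}_{xy}$ and bound by submultiplicativity:
\[
\norm{\D_x}\le\norm{\hat{\bSigma}_{xx}^{-1}}\,\norm{\hat{\bSigma}_{xy}-{\bSigma}_{xy}}+\norm{\hat{\bSigma}_{xx}^{-1}}\,\norm{{\bSigma}_{xx}-\hat{\bSigma}_{xx}}\,\norm{{\bSigma}_{xx}^{-1}{\bSigma}_{xy}}.
\]
From $\hat{\bSigma}_{xx}\succeq\gamma_x\I$ we get $\norm{\hat{\bSigma}_{xx}^{-1}}\le 1/\gamma_x$ surely, and since ${\bSigma}_{xx}=\bbE\,\hat{\bSigma}_{xx}$ the same lower eigenvalue bound transfers to ${\bSigma}_{xx}$, so $\norm{{\bSigma}_{xx}^{-1}}\le 1/\gamma_x$; together with $\norm{{\bSigma}_{xy}}=\norm{\bbE[\f_i\g_i^{\top}]}\le B$ (from \textbf{A2}) this gives $\norm{{\bSigma}_{xx}^{-1}{\bSigma}_{xy}}\le B/\gamma_x$. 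Taking expectations,
\[
\bbE\norm{\D_x}\le\frac{1}{\gamma_x}\,\bbE\norm{\hat{\bSigma}_{xy}-{\bSigma}_{xy}}+\frac{B}{\gamma_x^{2}}\,\bbE\norm{\hat{\bSigma}_{xx}-{\bSigma}_{xx}}.
\]

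Third, I would apply Lemma~\ref{lem:bernstein} to each deviation. Write $\hat{\bSigma}_{xx}-{\bSigma}_{xx}=\sum_{j=1}^{n}\tfrac1n(\f_{k_j}\f_{k_j}^{\top}-\bbE[\f\f^{\top}])$, a sum of i.i.d.\ mean-zero $d_x\times d_x$ matrices (\textbf{A1}); by \textbf{A2} each summand has spectral norm at most $2B/n$, and $\sum_j\bbE[\tfrac1{n^2}(\f\f^{\top}-\bbE[\f\f^{\top}])^2]=\tfrac1n(\bbE[\norm{\f}^2\f\f^{\top}]-(\bbE[\f\f^{\top}])^2)\preceq\tfrac{B^2}{n}\I$, so the variance parameter is at most $B^2/n$ and the lemma with $d_1=d_2=d_x$ gives $\bbE\norm{\hat{\bSigma}_{xx}-{\bSigma}_{xx}}\le\sqrt{2B^2\log(2d_x)/n}+\tfrac{2B}{3n}\log(2d_x)$. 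Repeating verbatim for $\hat{\bSigma}_{xy}-{\bSigma}_{xy}=\sum_j\tfrac1n(\f_{\ell_j}\g_{\ell_j}^{\top}-\bbE[\f\g^{\top}])$, now with $d_1=d_x$, $d_2=d_y$, summand norm again at most $2B/n$, and variance parameter at most $B^2/n$ (using $\bbE[\norm{\g}^2\f\f^{\top}]\preceq B^2\I$ and its transpose), gives $\bbE\norm{\hat{\bSigma}_{xy}-{\bSigma}_{xy}}\le\sqrt{2B^2\log(d_x+d_y)/n}+\tfrac{2B}{3n}\log(d_x+d_y)$. Substituting both into the display of the second step yields exactly $\bbE\norm{\D_x}\le e_1$, and the symmetric argument gives $\bbE\norm{\D_y}\le e_2$; combined with the first step, this proves the claimed bound.

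The step I expect to be the main obstacle is the Bernstein bookkeeping: establishing the two variance bounds $\bbE[(\f\f^{\top}-\bbE[\f\f^{\top}])^2]\preceq B^2\I$ and $\bbE[\norm{\g}^2\f\f^{\top}]\preceq B^2\I$ cleanly from \textbf{A2}, and making the constants and the two distinct dimension factors ($\log(2d_x)$ for the symmetric auto-covariance block versus $\log(d_x+d_y)$ for the rectangular cross-covariance block) line up with $e_1$ and $e_2$ exactly. The block reduction and the resolvent splitting are routine; the only mild subtlety there is that $\norm{\hat{\bSigma}_{xx}^{-1}}\le 1/\gamma_x$ must be used as an almost-sure bound (so it may be pulled out before taking expectations) and that the lower eigenvalue bound passes from $\hat{\bSigma}_{xx}$ to ${\bSigma}_{xx}=\bbE\,\hat{\bSigma}_{xx}$.
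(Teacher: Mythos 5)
Your proposal is correct and arrives at exactly the constants $e_1,e_2$, but it organizes the argument differently from the paper. The paper writes the block error as a sum of per-sample error matrices $\E_i=\frac{1}{n}\bigl(\hat{\bSigma}_{xx}^{-1}\f_i\g_i^\top-\bSigma_{xx}^{-1}\bSigma_{xy}\bigr)$, splits each into its mean plus fluctuation $\Z_i=\E_i-\bbE\,\E_i$, applies matrix Bernstein to $\{\Z_i\}$ (terms that still contain the random matrix $\hat{\bSigma}_{xx}^{-1}$), and then handles the bias $\sum_i\bbE\,\E_i$ via the resolvent identity and a second Bernstein application to $\hat{\bSigma}_{xx}-\bSigma_{xx}$; this route leans on the cross-independence in \textbf{A1} to factorize $\bbE\bigl[\hat{\bSigma}_{xx}^{-1}\f_i\g_i^\top\bigr]$, and strictly speaking the $\Z_i$ are only independent conditionally on the view-1 minibatch, a point the paper glosses over. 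You instead perform a purely deterministic split $\D_x=\hat{\bSigma}_{xx}^{-1}(\hat{\bSigma}_{xy}-\bSigma_{xy})+\hat{\bSigma}_{xx}^{-1}(\bSigma_{xx}-\hat{\bSigma}_{xx})\bSigma_{xx}^{-1}\bSigma_{xy}$, pull out $\norm{\hat{\bSigma}_{xx}^{-1}}\le 1/\gamma_x$ and $\norm{\bSigma_{xx}^{-1}\bSigma_{xy}}\le B/\gamma_x$ as sure bounds, and apply Bernstein only to the honest i.i.d.\ centered sums $\hat{\bSigma}_{xy}-\bSigma_{xy}$ and $\hat{\bSigma}_{xx}-\bSigma_{xx}$. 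What your route buys is cleanliness: the independence hypothesis of Lemma~\ref{lem:bernstein} is satisfied without any conditioning, and you never need the independence between the samples used for $\hat{\bSigma}_{xx}$ and those used for $\hat{\bSigma}_{xy}$ (only the within-sum i.i.d.\ sampling), so \textbf{A1} is used more lightly; the paper's bookkeeping, by contrast, is tailored to its factorization trick. Your variance computations ($\bbE[(\f\f^\top-\bbE\f\f^\top)^2]\preceq B^2\I$ via $\f\f^\top\preceq B\I$, and $\bbE[\norm{\g}^2\f\f^\top]\preceq B^2\I$) and the dimension factors $\log(2d_x)$ versus $\log(d_x+d_y)$ line up with the theorem exactly, and the transfer of the eigenvalue lower bound to $\bSigma_{xx}=\bbE\,\hat{\bSigma}_{xx}$ is the same implicit step the paper makes when it bounds $\norm{\hat{\bSigma}_{xx}^{-1}-\bSigma_{xx}^{-1}}\le\gamma_x^{-2}\norm{\hat{\bSigma}_{xx}-\bSigma_{xx}}$.
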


\begin{proof}
Since $\hat{\A}^{-1} \hat{\B} - {\A}^{-1} {\B}$ is block diagonal, its spectral norm is bounded by the maximum of the spectral norms of the individual blocks: 
\begin{align*}
\bbE\, \norm{\hat{\A}^{-1} \hat{\B} - {\A}^{-1} {\B}}   \le
\max\left\{
\bbE\, \norm{\hat{\bSigma}_{xx}^{-1} \hat{\bSigma}_{xy} - {\bSigma}_{xx}^{-1} {\bSigma}_{xy}},\ \bbE\, \norm{\hat{\bSigma}_{yy}^{-1} \hat{\bSigma}_{xy}^\top - {\bSigma}_{yy}^{-1} {\bSigma}_{xy}^\top} 
\right\}.
\end{align*}
We now focus on analyzing the first term in the above maximum as the other term follows analogously. % is symmetric.~\SideNoteRA{Not symmetric, What you mean to say is that the bound on the other term follows similarly or follows analogously.}   
Define the individual error terms
\begin{align*}
\E_i=\frac{1}{n} (\hat{\bSigma}_{xx}^{-1} \f_i \g_i^\top - {\bSigma}_{xx}^{-1} {\bSigma}_{xy}), \qquad
\E=\sum_{i=1}^n \E_i.
\end{align*}
Thus our goal is to bound $\norm{\hat{\bSigma}_{xx}^{-1} \hat{\bSigma}_{xy}- {\bSigma}_{xx}^{-1} {\bSigma}_{xy}}=\norm{\E}$. 

Notice that the matrices ${\bSigma}_{xx}$ and ${\bSigma}_{xy}$ are not random. And due to assumption \textbf{A1}, the samples used for estimating $\hat{\bSigma}_{xx}$ and $\hat{\bSigma}_{xy}$ are selected independently, so the expectation of $\hat{\bSigma}_{xx}^{-1} \f_i \g_i^\top$ factorizes. Therefore we have 
\begin{align*}
\bbE\,\E_i=\frac{1}{n} \left( \bbE\,\left[\hat{\bSigma}_{xx}^{-1}\right] {\bSigma}_{xy} - {\bSigma}_{xx}^{-1} {\bSigma}_{xy} \right),
\end{align*}
and the deviation of the individual error matrices from their expectation is
\begin{align*}
\Z_i:=\E_i - \bbE\,\E_i = \frac{1}{n} \left( \hat{\bSigma}_{xx}^{-1} \f_i \g_i^\top - \bbE\,\left[\hat{\bSigma}_{xx}^{-1}\right] {\bSigma}_{xy} \right).
\end{align*}
We would like to bound $\norm{\sum_{i=1}^n \Z_i}$ and $\norm{\bbE\,\E_i}$ separately using the matrix Bernstein inequality in Lemma~\ref{lem:bernstein}, and then use them to bound $e_1$. %%\kl{cite Tropp here?}. %, in order to construct a bound for $\norm{\E}$.
\jeff{List this inequality here for completeness.} \weiran{Added the lemma before theorem.}

\paragraph{Bounding $\norm{\sum_{i=1}^n \Z_i}$.} Notice that $\bbE\,\left[\Z_i\right]=\0$, and we can bound its norm (denoted $R$)
\begin{align*}
R:=\norm{\Z_i} & \le \frac{1}{n} \left(  \norm{\hat{\bSigma}_{xx}^{-1} \f_i \g_i^\top} + \norm{\bbE\,\left[\hat{\bSigma}_{xx}^{-1}\right] {\bSigma}_{xy}} \right) \\
& \le \frac{1}{n} \left( \norm{\hat{\bSigma}_{xx}^{-1}} \norm{\f_i \g_i^\top}+ \bbE\,\left[\norm{\hat{\bSigma}_{xx}^{-1}}\right] \norm{{\bSigma}_{xy}} \right) \\
& \le \frac{1}{n} \left(\frac{1}{\gamma_x} \norm{\f_i \g_i^\top} + \frac{1}{\gamma_x} \max_i \norm{\f_i \g_i^\top} \right) \\
% & \textrm{\raman{Check the bound on last term above. I get $\norm{{\bSigma}_{xy}} \leq \bbE\,\left[\max_i \norm{ \f_i \g_i^\top}\right]$}} \\ \weiran{$\bSigma_{xy}$ is not random.}
& \le \frac{2}{n \gamma_x}  \max_i \norm{\f_i \g_i^\top} \\
& \le \frac{2}{n \gamma_x} \max_i \norm{\f_i}\norm{\g_i} \\
& \le \frac{2B}{n \gamma_x},
\end{align*}
where %%~\SideNoteRA{state clearly that $\gamma_x$ and $\gamma_y$ are regularization constants and lower bound on the spectral norm of $\widehat{\bSigma}_{xx}$ and $\widehat{\bSigma}_{yy}$}
we have used the triangle inequality in the first inequality, and Jensen's inequality for the second %~\SideNoteRA{second inequality follows in part from sub-multiplicative property of the norms} 
and third inequality (since norms are convex functions). To apply the Matrix Bernstein inequality, we still need to bound the variance which is defined as
\begin{align*}
\sigma^2 :=&
\max\left\{ 
\norm{ \bbE\,\left[  \left(\sum_{i=1}^n \Z_i\right) \cdot \left(\sum_{i=1}^n \Z_i\right)^\top \right] },
\norm{  \bbE\,\left[ \left(\sum_{i=1}^n \Z_i\right)^\top \cdot \left(\sum_{i=1}^n \Z_i\right) \right] }
\right\} \\
=& \max\left\{ 
\norm{ \sum_{i=1}^n \bbE\,\left[ \Z_i \Z_i^\top \right]},
\norm{ \sum_{i=1}^n \bbE\,\left[ \Z_i^\top \Z_i \right]}
\right\},
\end{align*}
where we have used the fact that the $\Z_i$'s are independent and have mean zero in the second equality.

Let us consider an individual term in the summand of the second term:
\begin{align*}
 \Z_i^\top \Z_i  = & \frac{1}{n^2} \left( \g_i \f_i^\top \hat{\bSigma}_{xx}^{-2} \f_i \g_i^\top - \g_i \f_i^\top \hat{\bSigma}_{xx}^{-1} \bbE\,\left[ \hat{\bSigma}_{xx}^{-1} \right] \bSigma_{xy} \right. \\
 & \hspace{2em} \left. - \bSigma_{xy}^\top \bbE\,\left[ \hat{\bSigma}_{xx}^{-1} \right] \hat{\bSigma}_{xx}^{-1} \f_i \g_i^\top + \bSigma_{xy}^\top \bbE\,\left[ \hat{\bSigma}_{xx}^{-1} \right] \bbE\,\left[ \hat{\bSigma}_{xx}^{-1} \right] \bSigma_{xy} \right).
\end{align*}
Taking expectations %~\SideNoteRA{state clearly w.r.t. what}\weiran{Stated in the theorem now.} 
we see that 
\begin{align*}
\bbE\,\left[ \Z_i^\top \Z_i  \right]=\frac{1}{n^2} \left( 
\bbE\, \left[ \g_i \f_i^\top \hat{\bSigma}_{xx}^{-2} \f_i \g_i^\top \right] -  \bSigma_{xy}^\top \left(\bbE\,\left[ \hat{\bSigma}_{xx}^{-1} \right]\right)^2 \bSigma_{xy} \right)
\end{align*}
where we have again used the assumption \textbf{A1}. % that the samples for estimating cross-covariance are independent of the samples for estimating auto-covariance. 
Notice that $\bSigma_{xy}^\top \left( \bbE\,\left[ \hat{\bSigma}_{xx}^{-1} \right]\right)^2 \bSigma_{xy} $ is positive semi-definite, so subtracting it only decreases the spectral norm. We now take norms on both sides of the above equality and obtain
\begin{align*}
\norm{\bbE\,\left[ \Z_i^\top \Z_i  \right]} & \le \frac{1}{n^2} \norm{\bbE\, \left[ \g_i \f_i^\top \hat{\bSigma}_{xx}^{-2} \f_i \g_i^\top \right]} 
\le \frac{1}{n^2} \bbE\, \left[ \norm{ \g_i \f_i^\top \hat{\bSigma}_{xx}^{-2} \f_i \g_i^\top  \right]} \\
& \le \frac{1}{n^2} \bbE\, \left[ \norm{ \g_i \f_i^\top } \norm{  \hat{\bSigma}_{xx}^{-2} } \norm{ \f_i \g_i^\top }  \right] \\
& \le \frac{B^2}{n^2 \gamma_x^2},
\end{align*}
where Jensen's inequality is used in the second inequality.
A similar argument shows that
\begin{align*}
\norm{\bbE\,\left[ \Z_i \Z_i^\top \right]} \le \frac{1}{n^2} \norm{\bbE\, \left[ \hat{\bSigma}_{xx}^{-1} (  \f_i \g_i^\top \g_i \f_i^\top) \hat{\bSigma}_{xx}^{-1} \right]} \le \frac{B^2}{n^2 \gamma_x^2}.
\end{align*}
An invocation of the triangle inequality on the definition of $\sigma^2$ along with the above two bounds gives 
\begin{align*}
\sigma^2 \le \frac{B^2}{n \gamma_x^2}.
\end{align*}
We may now appeal to the matrix Bernstein inequality on the $d_x \times d_y$ matrices $\{ \Z_i \}_{i=1}^n$ to obtain the bound
\begin{align*}
\bbE\,\norm{ \sum_{i=1}^n \Z_i} & \le \sqrt{2\sigma^2 \log(d_x+d_y)} + \frac{1}{3} R \log(d_x+d_y) \\
&  = \frac{1}{\gamma_x} \left( \sqrt{ \frac{2B^2 \log(d_x+d_y)}{n}} + \frac{2B\log(d_x+d_y)}{3n} \right).
\end{align*}

\paragraph{Bounding $\bbE\,\E_i$.} Notice that 
\begin{align*}
\hat{\bSigma}_{xx}^{-1} - {\bSigma}_{xx}^{-1} = {\bSigma}_{xx}^{-1} ({\bSigma}_{xx}-\hat{\bSigma}_{xx}) \hat{\bSigma}_{xx}^{-1},
\end{align*}
% (KL) therefore
so that
\begin{align*}
\norm{ \bbE\,\E_i} & = \frac{1}{n} \norm {\bbE\,\left[\hat{\bSigma}_{xx}^{-1}\right] {\bSigma}_{xy} - {\bSigma}_{xx}^{-1} {\bSigma}_{xy} } \\
& \le \frac{1}{n} \norm {\bbE\,\left[\hat{\bSigma}_{xx}^{-1} - {\bSigma}_{xx}^{-1} \right]} \norm{ {\bSigma}_{xy} }\\
& \le \frac{1}{n} \bbE\,\left[ \norm {\hat{\bSigma}_{xx}^{-1} - {\bSigma}_{xx}^{-1}} \right] \norm{ {\bSigma}_{xy} }\\
& \le \frac{B}{n \gamma_x^2} \bbE\,\norm{\hat{\bSigma}_{xx} - {\bSigma}_{xx}}.
\end{align*}
Using the same matrix Bernstein technique (now applied to $\sum_{i=1}^n \Z_i^\prime$ with $\Z_i^\prime=\frac{1}{n} \left(\f_i \f_i^\top -{\bSigma}_{xx} \right)$), we have 
\begin{align*}
\bbE\,\norm{\hat{\bSigma}_{xx} - {\bSigma}_{xx}} \le 
\sqrt{ \frac{2B^2 \log(2 d_x)}{n}} + \frac{2B\log(2 d_x)}{3n}.
\end{align*}
We are now ready to bound the quantity of interest as
\begin{align*}
% \norm{\bbE\,\left[\hat{\bSigma}_{xx}^{-1} \hat{\bSigma}_{xy}\right] - {\bSigma}_{xx}^{-1} {\bSigma}_{xy}} & \le \bbE\,\left[ \norm{ \hat{\bSigma}_{xx}^{-1} \hat{\bSigma}_{xy}  - {\bSigma}_{xx}^{-1} {\bSigma}_{xy} } \right]  \\
e_1 & =\bbE\,\norm{ \hat{\bSigma}_{xx}^{-1} \hat{\bSigma}_{xy}  - {\bSigma}_{xx}^{-1} {\bSigma}_{xy} }  = \bbE\,\norm{\sum_{i=1}^n \E_i} \\
& = \bbE\,\norm{\sum_{i=1}^n \bbE\,\E_i + \sum_{i=1}^n \Z_i } \\
& \le \norm{ \sum_{i=1}^n \bbE\,\E_i } + \bbE\,\norm{ \sum_{i=1}^n \Z_i} \\
& \le \frac{B}{\gamma_x^2} \left( \sqrt{ \frac{2B^2 \log(2 d_x)}{n}} + \frac{2B\log(2 d_x)}{3n} \right) \\
& \qquad + \frac{1}{\gamma_x} \left( \sqrt{ \frac{2B^2 \log(d_x+d_y)}{n}} + \frac{2B\log(d_x+d_y)}{3n} \right). \\
 \end{align*}
The bound for $e_2=\bbE\,\norm{\hat{\bSigma}_{yy}^{-1} \hat{\bSigma}_{xy}^\top - {\bSigma}_{yy}^{-1} {\bSigma}_{xy}^\top}$ is completely 
% (KL) symmetric.
analogous.
\end{proof}

%%\weiran{The discussion below is updated.}
Assuming that $d_x=d_y=d$ in our algorithm and $\gamma=\min(\gamma_x, \gamma_y)$, then we have the dependency of the error in spectral norm as
\begin{align*}
\bbE\, \norm{\hat{\A}^{-1} \hat{\B}  - {\A}^{-1} {\B}} 
\le 
\left(\frac{B^2}{\gamma^2}+\frac{B}{\gamma}\right) \left( \sqrt{\frac{2 \log(2d)}{n}}+ \frac{2 \log(2d)}{3 n} \right).
\end{align*}
As expected, the error decreases as we use large minibatch size $n$.  %%~\SideNoteRA{Why would you expect the error to decrease with large regularization? This seems to be an artifact of the analysis. Larger the regularization, more we ignore the covariance structure in data. Besides, didn't we see in experiments that small regularization suffices?} 
Also, we observe the error decreases as $(\gamma_x, \gamma_y)$ increase. Notice that from the definition of $\hat{\bSigma}_{xx}$ and $\hat{\bSigma}_{yy}$ we are guaranteed that $\gamma_x\ge r_x$ and  $\gamma_y\ge r_y$. This means we can increase the regularization constants to improve the estimation error, and this is to be expected as larger $(r_x, r_y)$ render the auto-covariance matrices less relevant for estimating $\T$.
%%It is important to note that $r$ is only used in our proof as a lower bound of the smallest singular values of auto-covariance matrices.
In practice, as long as one uses a minibatch size $n>d$, the auto-covariance matrices are typically non-singular and $(r_x, r_y)$ are underestimate of $(\gamma_x,\gamma_y)$.

\jeff{It would be useful to empirically compute these results on the data to estimate the degree to which the inequality in Thm.\ 1 is tight and hence could potentially be improved.} \kl{agreed.  But let's do that another time :) }

\section*{Acknowledgment}
The authors would like to thank Louis Goldstein for providing phonetic alignments for the data used in the recognition experiment; Manaal Faruqui, Chris Dyer, Ang Lu, Mohit Bansal, and Kevin Gimpel for sharing resources for the multi-lingual embedding experiments; Nati Srebro for input on the stochastic optimization of DCCA; and Geoff Hinton for helpful discussion on the CCA objective. %%\kl{added some acks}

\bibliography{jmlr14a}

\end{document}